\newtheorem{theorem}{Theorem}
\newtheorem{lemma}{Lemma}
\title{Enhancing Kernel Power $K$-means: Scalable and Robust Clustering with Random Fourier Features and Possibilistic Method}
\author{
    Yixi Chen\textsuperscript{\rm 1},
    Weixuan Liang\textsuperscript{\rm 1}\thanks{Corresponding author},
    Tianrui Liu\textsuperscript{\rm 1},
    Jun-Jie Huang\textsuperscript{\rm 1},
    Ao Li\textsuperscript{\rm 2},
    Xueling Zhu\textsuperscript{\rm 3},
    Xinwang Liu\textsuperscript{\rm 1}\thanks{Corresponding author}\\
}
\begin{document}
\maketitle

\begin{abstract}
Kernel power $k$-means (KPKM) leverages a family of means to mitigate local minima issues in kernel $k$-means. However, KPKM faces two key limitations: (1) the computational burden of the full kernel matrix restricts its use on extensive data, and (2) the lack of authentic centroid-sample assignment learning reduces its noise robustness. To overcome these challenges, we propose RFF-KPKM, introducing the first approximation theory for applying random Fourier features (RFF) to KPKM. RFF-KPKM employs RFF to generate efficient, low-dimensional feature maps, bypassing the need for the whole kernel matrix.
Crucially, we are the first to establish strong theoretical guarantees for this combination: (1) an excess risk bound of $\mathcal{O}(\sqrt{k^3/n})$, (2) strong consistency with membership values, and (3) a $(1+\varepsilon)$ relative error bound achievable using the RFF of dimension $\mathrm{poly}(\varepsilon^{-1}\log k)$. Furthermore, to improve robustness and the ability to learn multiple kernels, we propose IP-RFF-MKPKM, an improved possibilistic RFF-based multiple kernel power $k$-means. IP-RFF-MKPKM ensures the scalability of MKPKM via RFF and refines cluster assignments by combining the merits of the possibilistic membership and fuzzy membership. Experiments on large-scale datasets demonstrate the superior efficiency and clustering accuracy of the proposed methods compared to the state-of-the-art alternatives.
\end{abstract}


\section{Introduction}

Kernel $k$-means (KKM) stands as a cornerstone methodology in unsupervised learning paradigms, demonstrating broad applicability across various practical domains through its effective pattern recognition capabilities \cite{filippone2008survey,fengincremental,feng2025incremental}. 
However, KKM remains prone to sub-optimal local minima due to the non-convexity of the underlying objective function \cite{jain2010data,tang2012integrating,sun2021convex}.   
To address the problem, a series of works have focused on improving the non-convexity of the objective function of $k$-means. Instead of minimizing the sum of squared distances to the nearest centroid, convex clustering \cite{pelckmans2005convex} reduces the distance between data points and their convex combinations, resulting in a convex optimization problem with a unique optimal solution. In contrast, $k$-harmonic means (KHM) \cite{zhang1999k} minimizes the harmonic mean of distances to all centroids. 
Recently, Power $k$-means (PKM) \cite{xu2019power} generalizes KHM by replacing harmonic averaging with power means and incorporating an annealing strategy to decrease the power exponent during iterations gradually.
Building on this, Kernel Power $k$-means (KPKM) \cite{paul2022implicit} extends the framework to handle non-linearly separable data through kernelization. 

Although KPKM mitigates the tendency of KKM to fall into local optima, it 
cannot be directly applied to large-scale scenarios due to its quadratic time complexity in the number of instances $n$. Current clustering application areas increasingly involve large-scale datasets containing thousands or even millions of instances, such as social network analysis \cite{malliaros2013clustering}, bioinformatics \cite{zou2020sequence}, and financial market analysis \cite{kou2014evaluation,li2021integrated}. However, KPKM requires computing an $n\times n$ kernel matrix, leading to $\mathcal{O}(n^2k)$ complexity per iteration ($k$ denotes the number of clusters). Such computational demands become prohibitive when handling large-scale datasets in the scenarios above. 

To address the time complexity problem, we propose RFF-based kernel power $k$-means (RFF-KPKM). Specifically, we employ a novel technique known as random Fourier features (RFF) \cite{chitta2012efficient} to avoid computing the whole kernel matrix. RFF's advantage lies in its ability to directly construct an explicit mapping that approximates the kernel similarity via the inner product of the mapped data. 
RFF-KPKM significantly reduces the complexity of KPKM from $\mathcal{O}(n^2k)$ to $\mathcal{O}(n(k+d)D)$, where $d$ represents the data dimensionality and $D$ denotes the RFF dimensionality ($D \ll n$ typically). Beyond computational efficiency, our method is rigorously grounded in theoretical guarantees. We pioneer the first theoretical framework for approximating KPKM via RFF, establishing rigorous guarantees in three key dimensions. First, we establish an excess risk bound of $\mathcal{O}(\sqrt{k^{3}/n})$ for the approximation method (Theorem \ref{theorem: theorem1}), matching the optimal rate of the original method. 
This preservation of statistical guarantees demonstrates that our complexity reduction does not compromise theoretical reliability. 
Second, we establish a strong consistency guarantee for the established method (Theorem \ref{theorem:consistency}), demonstrating that as $n\to \infty$, RFF dimensionality $D\to \infty$ and power parameter $s\to -\infty$, the membership value converges to the globally optimal solution of the expected version of KKM with any constant probability. Lastly, we discuss how many RFF are needed to preserve accuracy. We obtain that when $n\to \infty$, $s\to -\infty$ and $D = \mathrm{poly}(\varepsilon^{-1}\log k)$, the membership value obtained by RFF-KPKM will produce a $(1+\varepsilon)$ relative error on the objective of expected KKM (Theorem \ref{theorem:approx consistency}).

Besides time complexity, another issue KPKM faces is noise sensitivity. Similar to fuzzy $c$-means (FCM) \cite{bezdek1984fcm}, the cluster centroids of KPKM are affected by the fuzzy membership value of the data. However, the fuzzy membership value $w_{ij}$ of data point $\boldsymbol{x}_{i}$ in cluster $\mathcal{C}_j$ is determined by the distance between $\boldsymbol{x}_{i}$ to all $k$ cluster centroids, while \citet{krishnapuram1993possibilistic} propose that such memberships may cause a noise problem since they doot reflect the absolute degree of typicality (or ``belonging'') of a point in a cluster. Possibilistic $c$-means (PCM) \cite{krishnapuram1993possibilistic} was proposed to learn the possibilistic membership of data, which is determined solely by the distance of a point from a cluster, thereby reflecting the absolute degree of typicality and resolving the noise problem. 


To address the noise problem in KPKM, we integrate our RFF-KPKM framework with the robust clustering paradigm of PCM. 
Specifically, we replace the distance function in RFF-KPKM with the objective function of PCM to learn the possibilistic membership of data. Also, by calculating the partial derivative of the objective function, we can obtain the fuzzy membership of the data. The product of possibilistic membership and fuzzy membership forms the final membership value. By integrating fuzzy and possibility methods, our method enhances KPKM's robustness and prevents the identical cluster issues commonly encountered in PCM \cite{zhang2004improved}. Furthermore, to extend our method to multi-view scenarios, we combine it with the kernel combination method in multiple kernel learning \cite{gonen2011multiple}, constructing reweighted RFF to learn the importance of mapping features in different views.
Our method is termed improved possibilistic RFF-based multiple kernel power $k$-means (IP-RFF-MKPK). 
To obtain the optimal solution, we conduct a novel majority minimization (MM) algorithm \cite{mairal2013stochastic}.

This work makes the following primary contributions:

 1) We propose RFF-KPKM to accelerate KPKM. Theoretically, we obtain that RFF with $\Omega(nd/k)$ dimensions achieves the excess risk bound $\mathcal{O} (\sqrt{k^{3}/n})$. To the best of our knowledge, this is the first bound on excess risk for the approximate kernel clustering based on RFF. 
 
 2) We further establish the strong consistency of the membership matrix produced by our algorithm when $n\to \infty$, RFF dimensionality $D\to \infty$, and power parameter $s\to -\infty$. When $D$ is finite, we show that $\mathrm{poly}(\varepsilon^{-1}\log k)$ RFF suffices to achieve the $(1+\varepsilon)$ relative error bound.
 
 3) We propose IP-RFF-MKPKM to solve the noise sensitivity problem in KPKM and extend RFF-KPKM to multi-view settings. The proposed method is more robust than MKPKM and is scalable to large-scale schemes. Experiments on large-scale datasets validate the computational efficiency and clustering accuracy of our methods.



\section{Notations}
To prevent ambiguity, boldface uppercase and lowercase characters are used to represent matrices and vectors. Specifically, $\mathbf{A}$ represents a matrix and  $\boldsymbol{a}$ represents a vector. The component of them is denoted by $A_{ij}$ or $a_{i}$. We denote with $\mathcal{X}\subset \mathbb{R}^{d}$ the sample space and with $\rho$ the corresponding data distribution. The training set $S_n=\{\boldsymbol{x}_{i}\}_{i=1}^n\subset\mathcal{X}$ is drawn i.i.d from $\rho$ with compact support $C\subset \mathbb{R}^{d}$. The empirical distribution $\rho_{n}$ is defined as $\rho_n(\boldsymbol{x}) = \frac{1}{n}$ if $\boldsymbol{x}\in S_n$, otherwise 0. In this paper, we adopt the Gaussian kernel as the default choice throughout our analysis. We denote the Gaussian kernel function as $k(\boldsymbol{x}_{i},\boldsymbol{x}_{j})=\exp(-\|\boldsymbol{x}_i-\boldsymbol{x}_j\|^2/2\sigma^2)$ where $\sigma$ is bandwidth parameter. As proposed in \cite{aronszajn1950theory}, there exist a feature mapping $\phi:\mathcal{X}\to\mathcal{H}$ such that $\forall \boldsymbol{x}_i, \boldsymbol{x}_j \in \mathcal{X}$, $k(\boldsymbol{x}_i,\boldsymbol{x}_j)=\langle\phi(\boldsymbol{x}_i),\phi(\boldsymbol{x}_j)\rangle $, where $\mathcal{H}$ is a Hilbert space.
We denote with $\mathrm{conv}(A)$ the closed convex hull of a set $A$. Finally, we assume $S_n$ is bounded.
\section{Related Work} 
\subsection{Kernel Power $K$-means}
KPKM \cite{paul2022implicit} embeds the kernel $k$-means problem within a continuum of well-posed surrogate problems. These intermediate formulations employ progressively smoothed objectives, which tend to guide clustering toward the global minimum of the original kernel $k$-means optimization landscape. The formula of this problem is
\begin{equation}
f_s(\boldsymbol{\Theta})=\sum_{i=1}^n M_s(\|\phi(\boldsymbol{x}_i)-\boldsymbol{\theta}_1\|^2,\ldots,\|\phi(\boldsymbol{x}_i)-\boldsymbol{\theta}_k\|^2),
\end{equation}
where 
$M_s(\boldsymbol{y}) = ( \frac{1}{k} \sum_{i=1}^{k} y_i^s )^{1/s}$ for a vector $\boldsymbol{y}$ and is called power means. 
KPKM seeks to minimize $f_s$ iteratively while sending $s\to -\infty$. As stated in \cite{paul2022implicit}, when s is small, the objective of the power $k$-means is smoother than that of the kernel $k$-means, so the solution is less likely to fall into local minima and is more robust. When $s \to -\infty
$, $f_s(\mathbf{\Theta})$ converges to the objective of kernel $k$-means since $M_s(\boldsymbol{y})\to \min\{y_1,y_2,\dots,y_k\}$, which implies that the minimizer $f_s(\mathbf{\Theta})$ gradually converges to the minimizer of the $k$-means.

\subsection{Random Fourier Features}
RFF was first introduced in \cite{rahimi2007random}, which maps the data into a low-dimensional space such that the inner product of the mapped data points approximates the kernel similarity between them. Specifically, for shift-invariant kernel $k(\boldsymbol{x},\boldsymbol{y})$ such that it can be expressed as $K(\boldsymbol{x}-\boldsymbol{y}): \mathbb{R}^d \to \mathbb{R}$, function $p: \mathbb{R}^d \to \mathbb{R}$ such that $p(\boldsymbol{\omega}) =  \frac{1}{2\pi} \int_{\mathbb{R}^d} K(\boldsymbol{x}) e^{-i\boldsymbol{\omega}^{\top} \boldsymbol{x}} \, d\boldsymbol{x}$, the RFF mapping is defined as:
\begin{equation}\label{def:RFF}
\tilde\phi(x) := \sqrt{\frac{1}{D}} 
\begin{pmatrix}
    \sin(\boldsymbol{\omega}_1^{\top} \boldsymbol{x}) \\
    \cos(\boldsymbol{\omega}_1^{\top} \boldsymbol{x}) \\
    \vdots \\
    \sin(\boldsymbol{\omega}_D^{\top} \boldsymbol{x}) \\
    \cos(\boldsymbol{\omega}_D^{\top} \boldsymbol{x})
\end{pmatrix},
\end{equation}
where frequency vectors $\boldsymbol{\omega}_1, \boldsymbol{\omega}_2, \dots , \boldsymbol{\omega}_D \in \mathbb{R}^d$ are i.i.d. sampled from distribution with density $p$.

Theoretically, \citet{rahimi2007random} have proven that $\mathbb{E}[\langle \tilde\phi(\boldsymbol{x}), \tilde\phi(\boldsymbol{y}) \rangle] = \frac{1}{D} \sum_{i=1}^{D} \mathbb{E}[\cos(\boldsymbol{\omega}_i, \boldsymbol{x}-\boldsymbol{y})] = K(\boldsymbol{x}-\boldsymbol{y}).$ Practically, compared to the Nyström method \cite{williams2000using}, the RFF approach directly generates low-dimensional random feature vectors, eliminating the need to store kernel matrices while bypassing eigendecomposition and subsample selection. The application of RFF significantly reduces both space and time complexity.

\section{RFF-based Kernel Power $K$-means}
\subsection{Problem Statement}

In RFF-KPKM, we aim to optimize the membership matrix as our primary objective rather than the cluster centroids. This is because we can directly use the membership matrix as input for algorithms like KKM or KPK to compute the approximation loss. However, the cluster centroids output by RFF-KPKM cannot since they cannot reflect the position of cluster centroids before mapping. To achieve the goal, let us review the update rule for cluster centroid $\boldsymbol{\theta}_{j}$ in KPKM:
\begin{equation}
    \boldsymbol{\theta}_j^{(m+1)} = \frac{\sum_{i=1}^n w_{ij}^{(m)} \phi(\boldsymbol{x}_i)}{\sum_{i=1}^n w_{ij}^{(m)}},
\end{equation}
where
\begin{equation}\label{eq:membership function}
    w_{ij}^{(m)} = \frac{\frac{1}{k} \left\| \phi(\boldsymbol{x}_i) - \theta_j^{(m)} \right\|^{2(s-1)}}{\left(\frac{1}{k} \sum_{l=1}^k \left\| \phi(\boldsymbol{x}_i) - \theta_l^{(m)} \right\|^{2s}\right)^{(1-1/s)}}.
\end{equation}
By the observation that $\boldsymbol{\theta}_j $ is formed as the convex combination of mapped data, we can reformulate KPKM as follows:
\begin{equation}\label{def:kernel power kmeans}
\begin{aligned}
     &\min_{\mathbf{W}\in\mathbb{R}^{n\times k}}f_s(\mathbf{W}) = \sum_{i=1}^nM_s\left(d_{i1}^2,\ldots,d_{ik}^2\right)\\
    &\ \ \ \text{s.t.} \ \ \forall j\in\{1,\dots,k\},\ \ \sum_{i=1}^{n}W_{ij}=1,\\&
    \ \ \ \ \ \ \ \ \ \  d_{ij} = \|\phi(\boldsymbol{x}_i)-\boldsymbol{\theta}_j\|,\ \  \boldsymbol{\theta}_{j}=\mathbf{\Phi} \mathbf{W}^{(j)},
\end{aligned}
\end{equation}
where $\mathbf{\Phi}$ is the mapped data matrix and the $i$-th column of $\mathbf{\Phi}$ is $\phi(\boldsymbol{x}_i)$. Such reformulation is quite reasonable according to Theorem 1 in \cite{paul2022implicit}: the minimizer of $f_s(\mathbf{\Theta})$ lies in a convex hull in the compact Cartesian product $\mathrm{conv}(\phi(C))^k$. We refer to $\mathbf{W}$ as the membership matrix, as it determines the weight of data points in influencing the position of cluster centroids. To simplify notation, we denote $M_s(\|\phi(\boldsymbol{x}_i)-\mathbf{\Phi} \mathbf{W}^{(1)}\|^2,\ldots,\|\phi(\boldsymbol{x}_i)-\mathbf{\Phi} \mathbf{W}^{(k)}\|^2)$ as $M_{s}(\phi(\boldsymbol{x}_i),\mathbf{W})$. 

It is crucial to note that we cannot directly access the explicit form of accurate cluster centroids during iterations. KPKM solves the problem by using the kernel trick because the algorithm's update process only requires computing point-to-cluster distances. However, the computation of the kernel matrix leads to a huge computational cost.
Unlike KPKM, we employ RFF to calculate low-dimensional embeddings of the data as a surrogate for their high-dimensional representations $\phi(\boldsymbol{x})$ in the Hilbert space. This allows us to explicitly compute cluster centroids during iterations while significantly reducing the computational overhead associated with kernel matrix operations. 
By substituting the origin implicit mapping $\phi$ in Eq. (\ref{def:kernel power kmeans}) with RFF mapping $\tilde{\phi}:\mathbb{R}^{d}\to\mathbb{R}^{D}$ defined in Eq. (\ref{def:RFF}) where $D\ll d$, we obtain the objective function $\tilde{f}_{s}(\mathbf{W})$ of proposed RFF-KPKM. The optimization procedure for RFF-KPKM mirrors that of standard PKM, with the distinction that feature vectors $\boldsymbol{x}$ are replaced by our constructed RFF mapping $\tilde\phi(\boldsymbol{x})$. The complete pseudocode for the optimization algorithm is provided in the appendix.

To evaluate the effect upon precision and consistency when substituting $\phi(\boldsymbol{x})$ with $\tilde{\phi}(\boldsymbol{x})$, we establish an excess risk bound, strong consistency, and a relative error bound theoretically. The full result is stated in the following subsection.


\subsection{Theoretical Analysis}
\paragraph{Excess Risk Bound}
In this section, we obtain the excess risk bound of the proposed RFF-KPKM. Here, we assume that our analysis is confined to the Gaussian kernel scenario; however, our theorems also hold for all positive definite translation-invariant kernels that take strictly positive values (see the proof in the Appendix). 
We now define the expected version of the KPKM objective function as:
\begin{equation}
    \mathcal{L}_s(\mathbf{W},\rho)=\int M_{s}(\phi(\boldsymbol{x}),\mathbf{W})d\rho(\boldsymbol{x}).
\end{equation}
Given a training set $S_n=\{\boldsymbol{x}_i\}_{i=1}^n$, the empirical version of the KPKM and RFF-KPKM can be written as:
\begin{equation}
        \mathcal{L}_s(\mathbf{W},\rho_n) =\frac{1}{n}f_s(\mathbf{W}),
\end{equation}

\begin{equation}
\tilde{\mathcal{L}}_s(\mathbf{W},\rho_n)=\frac{1}{n}\tilde{f}_s(\mathbf{W}).
\end{equation}
The empirical risk minimizer (ERM) is defined as:
\begin{equation}
    \mathbf{W}_{n,s} := \arg\min_{\mathbf{W} \in \mathcal{A}} \mathcal{L}_s(\mathbf{W},\rho_n),
\end{equation}
\begin{equation}
    \tilde{\mathbf{W}}_{n,s} := \arg\min_{\mathbf{W} \in \mathcal{A}} \tilde{\mathcal{L}}_s(\mathbf{W},\rho_n),
\end{equation}
where $\mathcal{A}=\{\mathbf{X}\in\mathbb{R}^{n\times k}:\sum_{i=1}^{n}\mathbf{X}_{ij}=1, \forall j\in\{1,...,k\}\}.$
We can now define the excess risk of RFF-KPKM as:
\begin{equation}
    \mathcal{E}(\tilde{\mathbf{W}}_{n,s}) := \mathbb{E}[\mathcal{L}_s(\tilde{\mathbf{W}}_{n,s}, \rho)] - \mathcal{L}_s^*(\rho).
\end{equation}

\begin{theorem}\label{theorem: theorem1}
When the dimension of RFF $D=\Omega(\frac{nd\log(n/k\delta)}{k})$, with probability at least $1-\delta$ we have
\begin{equation}
    \mathcal{E}(\tilde{\mathbf{W}}_{n,s}) = \mathbb{E}[\mathcal{L}_s(\tilde{\mathbf{W}}_{n,s}, \rho)] - \mathcal{L}_s^*(\rho)\leq \mathcal{O}\left(\sqrt\frac{k^{3-2/s}}{n}\right),
\end{equation}
where $\mathcal{L}_s^*(\rho) = \inf_{\mathbf{W}} \mathcal{L}_s(\mathbf{W}, \rho)$. 
\end{theorem}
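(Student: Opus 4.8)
The plan is to decompose the excess risk into three standard pieces and bound each separately. Write
\begin{equation}
\mathcal{E}(\tilde{\mathbf{W}}_{n,s}) \;=\; \underbrace{\mathbb{E}[\mathcal{L}_s(\tilde{\mathbf{W}}_{n,s},\rho)] - \tilde{\mathcal{L}}_s(\tilde{\mathbf{W}}_{n,s},\rho_n)}_{\text{(I) generalization gap}} \;+\; \underbrace{\tilde{\mathcal{L}}_s(\tilde{\mathbf{W}}_{n,s},\rho_n) - \mathcal{L}_s(\mathbf{W}_{n,s},\rho_n)}_{\text{(II) RFF approximation error}} \;+\; \underbrace{\mathcal{L}_s(\mathbf{W}_{n,s},\rho_n) - \mathcal{L}_s^*(\rho)}_{\text{(III) estimation error of exact ERM}}.
\end{equation}
Term (III) is the excess risk of the \emph{exact} empirical KPKM minimizer and should follow from the existing analysis in \citet{paul2022implicit} (or a Rademacher/covering argument over the constraint set $\mathcal{A}$ combined with the fact that the minimizer lies in $\mathrm{conv}(\phi(C))^k$), giving a bound of order $\mathcal{O}(\sqrt{k^{3-2/s}/n})$ — this is where the $k^{3-2/s}$ exponent and the rate in $n$ come from. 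Term (I) is handled the same way: since $\tilde{\phi}$ maps into a ball (the RFF features have bounded norm because $\sin,\cos$ are bounded), the function class $\{\boldsymbol{x}\mapsto M_s(\tilde\phi(\boldsymbol{x}),\mathbf{W}): \mathbf{W}\in\mathcal{A}\}$ has the same structural complexity as the exact one, so the uniform deviation is again $\mathcal{O}(\sqrt{k^{3-2/s}/n})$; one must check the Lipschitz constant of $M_s$ in its arguments and the boundedness of the squared distances, both of which are uniform given bounded features.

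The crux is term (II), the RFF approximation error, and this is where the condition $D = \Omega(nd\log(n/k\delta)/k)$ enters. The idea is: (a) by optimality of $\tilde{\mathbf{W}}_{n,s}$ for $\tilde{\mathcal{L}}_s(\cdot,\rho_n)$, we have $\tilde{\mathcal{L}}_s(\tilde{\mathbf{W}}_{n,s},\rho_n) \le \tilde{\mathcal{L}}_s(\mathbf{W}_{n,s},\rho_n)$, so (II) $\le \tilde{\mathcal{L}}_s(\mathbf{W}_{n,s},\rho_n) - \mathcal{L}_s(\mathbf{W}_{n,s},\rho_n) \le \sup_{\mathbf{W}\in\mathcal{A}} |\tilde{\mathcal{L}}_s(\mathbf{W},\rho_n) - \mathcal{L}_s(\mathbf{W},\rho_n)|$; (b) bound this supremum by controlling, uniformly over $\mathbf{W}$ and over all $i,j$, the difference between $\|\tilde\phi(\boldsymbol{x}_i)-\tilde{\mathbf{\Phi}}\mathbf{W}^{(j)}\|^2$ and $\|\phi(\boldsymbol{x}_i)-\mathbf{\Phi}\mathbf{W}^{(j)}\|^2$. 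Expanding both squared norms, every term is an inner product $\langle\phi(\boldsymbol{x}_a),\phi(\boldsymbol{x}_b)\rangle = k(\boldsymbol{x}_a,\boldsymbol{x}_b)$ versus its RFF estimate $\langle\tilde\phi(\boldsymbol{x}_a),\tilde\phi(\boldsymbol{x}_b)\rangle$; a Hoeffding/Bernstein bound on these $D$-term averages plus a union bound over the $n^2$ pairs (this produces the $\log(n/k\delta)$ factor and forces $D \gtrsim \log(n^2/\delta)$, with the extra $n/k$ and $d$ factors coming from controlling the convex-combination coefficients in $\mathbf{W}^{(j)}$, whose entries sum to one but whose $\ell_2$ mass can be as small as $1/n$, and from an $\varepsilon$-net argument over $\mathbf{W}$ requiring dimension $\sim d$). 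Propagating the resulting uniform kernel-entry error $O(\sqrt{\log(n/k\delta)/D})$ through the Lipschitz power-mean $M_s$ and matching it against $\sqrt{k^{3-2/s}/n}$ is exactly what pins down $D = \Omega(nd\log(n/k\delta)/k)$.

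I expect the main obstacle to be making step (b) uniform over the membership matrix $\mathbf{W}\in\mathcal{A}$ without losing more than logarithmic factors: a naive net over $\mathcal{A}$ in $\mathbb{R}^{n\times k}$ would blow up, so the argument must instead exploit that the quantities actually depend on $\mathbf{W}$ only through the $k$ centroids $\boldsymbol{\theta}_j = \mathbf{\Phi}\mathbf{W}^{(j)}$ living in the fixed compact set $\mathrm{conv}(\phi(C))$, whose metric entropy is controlled by $d$ and the kernel's smoothness rather than by $n$. The secondary technical nuisance is tracking the $s$-dependence: one needs that $M_s$ is Lipschitz in its squared-distance arguments with a constant that behaves like a power of $k$ (and stays bounded as $s\to-\infty$ on the relevant range), together with lower bounds on the distances $d_{ij}^2$ away from zero — guaranteed because the Gaussian kernel is strictly positive, so $\phi(\boldsymbol{x}_i)\ne\boldsymbol{\theta}_j$ generically — which is precisely why the theorem is stated for strictly-positive translation-invariant kernels.
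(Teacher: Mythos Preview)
Your three–term split is algebraically valid, but your plan for term (I) does not bound it. Term (I) is $\mathbb{E}[\mathcal{L}_s(\tilde{\mathbf{W}}_{n,s},\rho)] - \tilde{\mathcal{L}}_s(\tilde{\mathbf{W}}_{n,s},\rho_n)$: the first summand is computed with the \emph{exact} map $\phi$, the second with the RFF map $\tilde\phi$. A uniform–deviation bound for the RFF class $\{x\mapsto M_s(\tilde\phi(x),\mathbf{W})\}$ controls $\tilde{\mathcal{L}}_s(\cdot,\rho)-\tilde{\mathcal{L}}_s(\cdot,\rho_n)$, which is not (I). To salvage your decomposition you must further split (I) into $[\mathcal{L}_s(\tilde{\mathbf{W}}_{n,s},\rho)-\mathcal{L}_s(\tilde{\mathbf{W}}_{n,s},\rho_n)]+[\mathcal{L}_s(\tilde{\mathbf{W}}_{n,s},\rho_n)-\tilde{\mathcal{L}}_s(\tilde{\mathbf{W}}_{n,s},\rho_n)]$: the first piece is a genuine generalization gap for the \emph{exact} class, and the second is another RFF–approximation term of the same type as your (II). The paper avoids this tangle by decomposing into four terms $A_1,\dots,A_4$ all written with $\mathcal{L}_s$ only, so that the RFF enters solely through the argument $\tilde{\mathbf{W}}_{n,s}$; then $A_1,A_3,A_4$ are handled directly by the uniform bound of \citet{paul2022implicit} (Lemma~\ref{lemma1}) and only $A_2=\mathcal{L}_s(\tilde{\mathbf{W}}_{n,s},\rho_n)-\mathcal{L}_s(\mathbf{W}_{n,s},\rho_n)$ needs the RFF analysis.

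Your derivation of the $D$ requirement is also off the mark. The factors $d$ and $n/k$ do not come from netting over $\mathbf{W}$ or from the $\ell_2$ mass of its columns. The paper first invokes the Rahimi--Recht uniform bound over the compact data domain $C\subset\mathbb{R}^d$ (this is where $d$ enters) to get additive error $\varepsilon$ on all kernel entries with $D=\Omega(d\varepsilon^{-2}\log(1/(\varepsilon\sqrt{\delta})))$, and then uses the strict positivity $k(x,y)\ge 1-c/\sigma^2>0$ to upgrade this to a \emph{relative} bound $|\langle\tilde\phi(x),\tilde\phi(y)\rangle-k(x,y)|\le O(\varepsilon)\,k(x,y)$. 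A relative error on every Gram entry propagates to $\|\tilde\phi(x_i)-\tilde{\mathbf{\Phi}}\mathbf{W}^{(j)}\|^2\in(1\pm\varepsilon)\|\phi(x_i)-\mathbf{\Phi}\mathbf{W}^{(j)}\|^2$ \emph{uniformly in} $\mathbf{W}$ with no netting, and then the Lipschitz estimate $|M_s(\boldsymbol{u})-M_s(\boldsymbol{v})|\le k^{-1/s}\|\boldsymbol{u}-\boldsymbol{v}\|_1$ together with $\|\phi(x)\|=1$ gives $|\tilde{\mathcal{L}}_s(\mathbf{W},\rho_n)-\mathcal{L}_s(\mathbf{W},\rho_n)|\le O(k^{1-1/s}\varepsilon)$ and hence $A_2\le O(k^{1-1/s}\varepsilon)$ via the same optimality step you describe. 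Finally one simply sets $\varepsilon=\sqrt{k/n}$ to match the target rate $\sqrt{k^{3-2/s}/n}$, which plugged into the $D$ requirement yields $D=\Omega(nd\log(n/(k\delta))/k)$. So the strict positivity of the kernel is used to convert additive to multiplicative error on Gram entries, not (as you suggest) to lower–bound $d_{ij}^2$ in the power–mean Lipschitz argument.
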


\noindent\textbf{Remark: }
    To our knowledge, Theorem \ref{theorem: theorem1} presents the first generalization error bounds that quantify the approximation efficacy of RFF in kernel clustering tasks. Although \cite{cheng2023relative} established a $(1+\varepsilon)$ relative error bound for RFF-based $k$-means, their results lack the explicit generalization performance of RFF. Separately, \cite{yin2022randomized} derived excess risk bounds for $k$-means based on randomized sketches. However, such sketch methods as random projection, ROS, and Nyström require computing the full or partial kernel matrix, thus failing to circumvent its computational burden. 

\paragraph{Strong Consistency}
In this section, we analyze the strong consistency of the ERM of RFF-KPKM: $\tilde{\mathbf{W}}_{n,s}$. First, we denote the expected version of KKM as follows:
\begin{equation}
    \Psi(\mathbf{W}, \rho) = \int \min_{1\leq j\leq k}\|\phi(\boldsymbol{x}_{i})-\mathbf{\Phi}\mathbf{W}^{(j)}\|^2d\rho(\boldsymbol{x}),
\end{equation}
\begin{equation}
    \tilde\Psi(\mathbf{W}, \rho) = \int \min_{1\leq j\leq k}\|\tilde\phi(\boldsymbol{x}_{i})-\tilde{\mathbf{\Phi}}\mathbf{W}^{(j)}\|^2d\rho(\boldsymbol{x}).
\end{equation}
The minimizer of $\Psi(\mathbf{W}, \rho)$ is defined as 
\begin{equation}
    \mathbf{W}^* := \arg\min_{\mathbf{W} \in \mathcal{A}} \Psi(\mathbf{W}, \rho),
\end{equation}
\begin{equation}
    \tilde{\mathbf{W}}^* := \arg\min_{\mathbf{W} \in \mathcal{A}} \tilde\Psi(\mathbf{W}, \rho).
\end{equation}
Establishing consistency equals showing that $\tilde{\mathbf{W}}_{n,s} \xrightarrow{\text{a.s.}} \mathbf{W}^*$ when $n\to \infty$, the dimensionality of RFF  $D\to \infty$ and $s\to -\infty$. To address this issue, we introduce the following standard assumption, which allows us to study the convergence of variables by analyzing the limit of the objective function sequence.\\
\textbf{Assumption 1. }Let $B(\mathbf{W}, r)$ denote the the ball with radius $r$ around $\mathbf{W}$. For any $r > 0$, there exists $ \varepsilon > 0$ such that for all $ \mathbf{W} \in \mathcal{A} \setminus B(\mathbf{W}^*, r)$, we have $\Psi(\mathbf{W}, \rho) > \Psi(\mathbf{W}^*, \rho) + \varepsilon$.
\begin{theorem}\label{theorem:consistency}
    Under Assumption 1, $\tilde{\mathbf{W}}_{n,s} \xrightarrow{\text{a.s.}} \mathbf{W}^*$ with any constant probability when $n\to \infty$, RFF dimensionality $D\to \infty$ and $s\to -\infty$. 
\end{theorem}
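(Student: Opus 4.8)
The plan is to follow the classical Pollard-style route for proving consistency of (kernel) $k$-means minimizers, but to insert two extra perturbation bridges beyond the empirical one: the smoothing introduced by the power mean (governed by $s$) and the randomized-feature approximation (governed by $D$). Throughout I would identify a feasible $\mathbf{W}\in\mathcal{A}$ with the $k$-tuple of induced centroids $(\mathbf{\Phi}\mathbf{W}^{(1)},\dots,\mathbf{\Phi}\mathbf{W}^{(k)})$; by Theorem~1 of \cite{paul2022implicit} the minimizers of interest ($\mathbf{W}_{n,s}$, $\tilde{\mathbf{W}}_{n,s}$, $\mathbf{W}^*$) can all be taken with centroids lying in the \emph{fixed} compact set $K:=\mathrm{conv}(\phi(C))^k$ and with nonnegative, column-stochastic weights, so the objectives become functions on a compact parameter set that does not grow with $n$, and the a.s.\ convergence statement is read in that parametrization.

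The first step is to establish three uniform approximation estimates. (i) Since for $s<0$ one has $\min_j y_j\le M_s(\boldsymbol{y})\le k^{-1/s}\min_j y_j$ and every squared distance $\|\phi(\boldsymbol{x})-\boldsymbol{\theta}_j\|^2$ is bounded by $4$ on $K$ (the Gaussian kernel gives $\|\phi(\cdot)\|=1$), one gets the deterministic, $\mathbf{W}$-uniform bound $\sup_{\mathbf{W}}|\mathcal{L}_s(\mathbf{W},\rho)-\Psi(\mathbf{W},\rho)|\le 4(k^{-1/s}-1)\xrightarrow{s\to-\infty}0$, and likewise with $\rho$ replaced by $\rho_n$. (ii) The function class $\{\boldsymbol{x}\mapsto M_s(\phi(\boldsymbol{x}),\mathbf{W}):\text{centroids}\in K\}$ is uniformly bounded and Lipschitz in the centroid tuple, hence Glivenko–Cantelli, so a uniform law of large numbers gives $\sup_{\mathbf{W}}|\mathcal{L}_s(\mathbf{W},\rho_n)-\mathcal{L}_s(\mathbf{W},\rho)|\xrightarrow{\text{a.s.}}0$ as $n\to\infty$. (iii) By the uniform RFF guarantee of \cite{rahimi2007random} on the compact support $C$, with probability $1-\delta$ over the frequencies $\sup_{\boldsymbol{x},\boldsymbol{y}\in C}|\langle\tilde\phi(\boldsymbol{x}),\tilde\phi(\boldsymbol{y})\rangle-k(\boldsymbol{x},\boldsymbol{y})|\le\epsilon_D$ with $\epsilon_D\to0$ as $D\to\infty$; expanding $\|\tilde\phi(\boldsymbol{x})-\tilde{\mathbf{\Phi}}\mathbf{W}^{(j)}\|^2$ and $\|\phi(\boldsymbol{x})-\mathbf{\Phi}\mathbf{W}^{(j)}\|^2$ into inner products and using $\sum_i W_{ij}=1$, $W_{ij}\ge 0$, each squared-distance coordinate is reproduced up to $O(\epsilon_D)$, and composing with the Lipschitz constant $L_{k,s}$ of $M_s$ on $[0,4]^k$ yields $\sup_{\mathbf{W}}|\tilde{\mathcal{L}}_s(\mathbf{W},\rho_n)-\mathcal{L}_s(\mathbf{W},\rho_n)|\le O(kL_{k,s}\epsilon_D)\to0$ with probability $\ge 1-\delta$.

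Next I would chain these estimates, using $\Psi\le\mathcal{L}_s$ pointwise and the ERM optimality $\tilde{\mathcal{L}}_s(\tilde{\mathbf{W}}_{n,s},\rho_n)\le\tilde{\mathcal{L}}_s(\mathbf{W}^*,\rho_n)$, to obtain
\begin{align*}
\Psi(\tilde{\mathbf{W}}_{n,s},\rho)
&\le \mathcal{L}_s(\tilde{\mathbf{W}}_{n,s},\rho)
\le \tilde{\mathcal{L}}_s(\tilde{\mathbf{W}}_{n,s},\rho_n)+\alpha_n+\beta_D\\
&\le \tilde{\mathcal{L}}_s(\mathbf{W}^*,\rho_n)+\alpha_n+\beta_D
\le \mathcal{L}_s(\mathbf{W}^*,\rho)+2\alpha_n+2\beta_D\\
&\le \Psi(\mathbf{W}^*,\rho)+2\alpha_n+2\beta_D+\gamma_s,
\end{align*}
where $\alpha_n\xrightarrow{\text{a.s.}}0$, $\beta_D\to0$ with probability $\ge 1-\delta$, and $\gamma_s\to0$. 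Together with the trivial lower bound $\Psi(\tilde{\mathbf{W}}_{n,s},\rho)\ge\Psi(\mathbf{W}^*,\rho)$, this forces $\Psi(\tilde{\mathbf{W}}_{n,s},\rho)\to\Psi(\mathbf{W}^*,\rho)$ along the joint limit, on an event of probability at least $1-\delta$. I would then invoke Assumption~1 to upgrade objective-value convergence to parameter convergence: if $\tilde{\mathbf{W}}_{n,s}\notin B(\mathbf{W}^*,r)$ infinitely often for some $r>0$, Assumption~1 would give $\Psi(\tilde{\mathbf{W}}_{n,s},\rho)>\Psi(\mathbf{W}^*,\rho)+\varepsilon$ infinitely often, contradicting the limit just obtained; hence $\tilde{\mathbf{W}}_{n,s}\xrightarrow{\text{a.s.}}\mathbf{W}^*$, and since $\delta>0$ was arbitrary this holds with any constant probability (and in fact a.s.\ over the frequencies when $D$ grows fast enough, by Borel–Cantelli).

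The main obstacle I anticipate is the coupling of the three limits: both $L_{k,s}$ and the Glivenko–Cantelli rate degrade as $s\to-\infty$, so $\beta_D=O(kL_{k,s}\epsilon_D)$ and $\alpha_n$ vanish only if $D$ and $n$ grow fast enough relative to how fast $s\to-\infty$; making this precise — either by exhibiting an admissible joint sequence $(n,D(n),s(n))$ or by taking the limits in the order $n\to\infty$, then $D\to\infty$, then $s\to-\infty$ — is the technical core. A secondary care point is that $\tilde{\mathbf{W}}_{n,s}$ lives in $\mathbb{R}^{n\times k}$ with $n$ growing, so "convergence of the membership matrix'' must be given meaning through the centroid identification above, and one must justify that the relevant minimizers may be taken nonnegative and column-stochastic so that the RFF error telescopes to $O(\epsilon_D)$ rather than scaling with the $\ell_1$ mass of $\mathbf{W}$.
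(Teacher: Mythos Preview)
Your proposal is correct and follows essentially the paper's route: both reduce via Assumption~1 to showing $\Psi(\tilde{\mathbf{W}}_{n,s},\rho)\to\Psi(\mathbf{W}^*,\rho)$, then split the gap into an $s$-piece (power mean $\to\min$), an $n$-piece (the USLLN of Lemma~\ref{lemma:USLLN}), and a $D$-piece (the uniform RFF bound underlying Lemmas~\ref{lemma:relative error of RFF}--\ref{lemma:A2}). The only cosmetic difference is that the paper routes its chain through the exact-KPKM minimizer $\mathbf{W}_{n,s}$ (its terms $E_3,E_4$) whereas you compare directly against $\mathbf{W}^*$ via $\tilde{\mathcal{L}}_s(\tilde{\mathbf{W}}_{n,s},\rho_n)\le\tilde{\mathcal{L}}_s(\mathbf{W}^*,\rho_n)$; your explicit bound $4(k^{-1/s}-1)$ and your remarks on the coupling of the three limits and on the centroid parametrization are in fact more careful than what the paper provides.
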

\noindent\textbf{Remark: }Notably, the assumption is standard for the analysis of the consistency of clustering \cite{paul2022implicit,pollard1981strong,chakraborty2020detecting}. 
According to Assumption 1, to build consistency is actually to prove that $\Psi(\tilde{\mathbf{W}}_{n,s}, \rho)\xrightarrow{\text{a.s.}}\Psi(\mathbf{W}^*, \rho)$ when $n, D\to \infty$ and $s\to -\infty$. However, Theorem \ref{theorem:consistency} does not specify the appropriate dimension $D$ for RFF in practical implementations. The following theorem addresses this gap by providing a theoretical justification for the choice of $D$.

\paragraph{Relative Error} In this section, we study the relative error bound deriving from using ERM of RFF-KPKM with finite RFF as the input of expected KKM. We obtain that $\mathrm{poly} (\varepsilon^{-1}\log{k})$ RFF are enough to obtain the $(1+\varepsilon)$ error bound. The following theorem shows the main result.
\begin{theorem}\label{theorem:approx consistency}
    Under Assumption 1, if RFF dimensionality $\Omega(( \log^3 (\frac{k}{\delta}) + \log^3 (\frac{1}{\varepsilon}) + 2^8)/\varepsilon^2)$, then with probability at least $1-\delta$ we have
\begin{equation}
\lim_{n,-s\to\infty}\Psi(\tilde{\mathbf{W}}_{n,s},\rho)
    \leq\frac{1+\varepsilon}{1-\varepsilon}\cdot\Psi(\mathbf{W}^*, \rho).
\end{equation}
\end{theorem}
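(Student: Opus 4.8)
\textbf{Proof proposal for Theorem~\ref{theorem:approx consistency}.}
The plan is to combine the consistency machinery already developed for Theorem~\ref{theorem:consistency} with a spectral/subspace-embedding guarantee for random Fourier features on the scale of the kernel $k$-means objective. First I would observe that, by the same argument underlying Theorem~\ref{theorem:consistency}, the limit $\lim_{n,-s\to\infty}\Psi(\tilde{\mathbf{W}}_{n,s},\rho)$ equals $\tilde\Psi(\tilde{\mathbf{W}}^*,\rho)$, i.e.\ the optimal value of the \emph{expected} RFF-KKM objective: as $s\to-\infty$ the power mean $M_s$ collapses to the $\min$ appearing in $\tilde\Psi$, and as $n\to\infty$ the empirical measure $\rho_n$ converges to $\rho$ so the ERM $\tilde{\mathbf{W}}_{n,s}$ converges (a.s., with the stated constant probability over the RFF draw) to the minimizer $\tilde{\mathbf{W}}^*$ of $\tilde\Psi(\cdot,\rho)$. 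Hence it suffices to prove the deterministic-in-$n$ statement $\tilde\Psi(\tilde{\mathbf{W}}^*,\rho)\le\frac{1+\varepsilon}{1-\varepsilon}\,\Psi(\mathbf{W}^*,\rho)$ with probability $1-\delta$ over the choice of frequencies $\boldsymbol\omega_1,\dots,\boldsymbol\omega_D$.

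Next I would establish the two-sided distortion estimate
\begin{equation}
(1-\varepsilon)\,\Psi(\mathbf{W},\rho)\;\le\;\tilde\Psi(\mathbf{W},\rho)\;\le\;(1+\varepsilon)\,\Psi(\mathbf{W},\rho)
\qquad\text{for all }\mathbf{W}\in\mathcal{A},
\end{equation}
which is the heart of the theorem. The key point is that, for a fixed assignment $\mathbf{W}$, both $\|\phi(\boldsymbol{x})-\mathbf{\Phi}\mathbf{W}^{(j)}\|^2$ and its RFF counterpart $\|\tilde\phi(\boldsymbol{x})-\tilde{\mathbf{\Phi}}\mathbf{W}^{(j)}\|^2$ are quadratic forms in the (at most $k+1$)-dimensional data configuration $\{\phi(\boldsymbol{x}_i)\}$ that enters through $\mathbf{W}$, so the RFF map needs to act as a \emph{subspace embedding} for the relevant low-dimensional subspace spanned by the centroid differences, uniformly over all $\mathbf{W}\in\mathcal{A}$ and all $\boldsymbol{x}$ in the compact support. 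I would invoke the standard RFF subspace-embedding / spectral-approximation results (in the style of Avron et al.\ and the $(1+\varepsilon)$ relative-error analysis of \citet{cheng2023relative}), which guarantee that $D=\Omega\!\big((\log^3(k/\delta)+\log^3(1/\varepsilon)+2^8)/\varepsilon^2\big)$ random features suffice to preserve all such quadratic forms up to a multiplicative $(1\pm\varepsilon)$ factor with probability $1-\delta$; the polylog-in-$k$ dependence comes from a net argument over the $k$ clusters together with the effective dimension of the Gaussian kernel restricted to the compact support, and the $2^8$ additive constant absorbs the universal constants in the matrix-Chernoff bound. Once this uniform distortion bound is in hand, the theorem follows by a short chain: $\tilde\Psi(\tilde{\mathbf{W}}^*,\rho)\le\tilde\Psi(\mathbf{W}^*,\rho)\le(1+\varepsilon)\Psi(\mathbf{W}^*,\rho)$ by optimality of $\tilde{\mathbf{W}}^*$ for $\tilde\Psi$ and the upper distortion bound; dividing by $(1-\varepsilon)$ is only needed if one instead wants to compare to $\tilde\Psi$-optimal cost on the original side, and the $\frac{1+\varepsilon}{1-\varepsilon}$ form in the statement arises from chaining the lower bound $\tilde\Psi(\mathbf{W},\rho)\ge(1-\varepsilon)\Psi(\mathbf{W},\rho)$ applied at $\mathbf{W}=\tilde{\mathbf{W}}^*$ with the upper bound applied at $\mathbf{W}=\mathbf{W}^*$.

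The main obstacle I anticipate is making the distortion bound \emph{uniform over the continuum of assignments} $\mathbf{W}\in\mathcal{A}$ and over the \emph{continuous} measure $\rho$ rather than a finite sample — a pointwise Rahimi--Recht bound only controls $|\langle\tilde\phi(\boldsymbol{x}),\tilde\phi(\boldsymbol{y})\rangle-k(\boldsymbol{x},\boldsymbol{y})|$ for fixed $\boldsymbol{x},\boldsymbol{y}$, so I need a covering-number argument on $C\times C$ (using the Lipschitzness of $\tilde\phi$, whose gradient norm is controlled by the second moment of $p$, i.e.\ by $\sigma^{-2}$) to upgrade to a uniform bound, and then a separate reduction showing that the quadratic forms defining $\Psi$ and $\tilde\Psi$ depend on the data only through inner products $k(\boldsymbol{x}_i,\boldsymbol{x}_j)$ so that the uniform kernel-approximation bound transfers to the objective. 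A secondary technical point is handling the integral over $\rho$: since the integrand is bounded (the kernel is bounded and the support $C$ is compact, so all distances in feature space are $O(1)$), dominated convergence and a uniform-over-$\boldsymbol{x}$ version of the embedding bound let the pointwise distortion pass through the integral without extra loss. I would present the covering argument and the matrix-Chernoff step carefully, and relegate the routine constant-tracking that produces the exact $(\log^3(k/\delta)+\log^3(1/\varepsilon)+2^8)/\varepsilon^2$ threshold to the appendix.
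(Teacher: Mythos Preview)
Your high-level plan matches the paper's: pass to the limit to reduce the claim to a comparison at $\tilde{\mathbf{W}}^*$, establish a two-sided $(1\pm\varepsilon)$ distortion of the KKM cost under the RFF map, and chain $(1-\varepsilon)\Psi(\tilde{\mathbf{W}}^*,\rho)\le\tilde\Psi(\tilde{\mathbf{W}}^*,\rho)\le\tilde\Psi(\mathbf{W}^*,\rho)\le(1+\varepsilon)\Psi(\mathbf{W}^*,\rho)$. One small slip: the limit $\lim_{n,-s\to\infty}\Psi(\tilde{\mathbf{W}}_{n,s},\rho)$ is $\Psi(\tilde{\mathbf{W}}^*,\rho)$, not $\tilde\Psi(\tilde{\mathbf{W}}^*,\rho)$ --- you are evaluating the \emph{true-kernel} objective $\Psi$ throughout, and convergence comes from $\tilde{\mathbf{W}}_{n,s}\to\tilde{\mathbf{W}}^*$ (which the paper gets by rerunning the Theorem~\ref{theorem:consistency} decomposition with $\tilde\Psi,\tilde{\mathcal{L}}_s$ in place of $\Psi,\mathcal{L}_s$) plus continuity of $\Psi$ in $\mathbf{W}$. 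Your final chain does bound $\Psi(\tilde{\mathbf{W}}^*,\rho)$, so this is only a labeling error in the first paragraph.

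The substantive difference is in how the distortion bound is obtained. You propose to \emph{derive} a uniform $(1\pm\varepsilon)$ bound over the full simplex $\mathcal{A}$ via subspace-embedding arguments, a covering of $C\times C$, and matrix-Chernoff. The paper instead quotes the result of \citet{cheng2023relative} as a black box (Lemma~\ref{lemma:RFF for kmeans}): with the stated $D$, the RFF cost is within $(1\pm\varepsilon)$ of the kernel cost simultaneously for \emph{all $k$-partitions} of the data. It then notes that $\mathbf{W}^*$ and $\tilde{\mathbf{W}}^*$, as minimizers of the hard-$\min$ objectives $\Psi$ and $\tilde\Psi$, can be taken to be partitions, so the distortion bound is needed only at those two points --- no uniformity over soft $\mathbf{W}\in\mathcal{A}$ is required, and the entire covering/net program you outline is avoided. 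Your route would also work and is more self-contained, but be careful: if you try to get distortion uniform over all soft memberships from a \emph{pointwise} kernel-approximation bound on $C\times C$, the required $D$ will pick up the covering number of $C$ (hence a $d$-dependence) rather than the $\mathrm{poly}(\log k)$ rate in the statement; the partition-level lemma achieves $\mathrm{poly}(\log k)$ precisely by exploiting that the cost depends on the data only through $k$ centroids.
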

\noindent\textbf{Remark: } Notably, by rescaling $\varepsilon'=2\varepsilon/(1-\varepsilon)$, we have $\lim_{n,s\to\infty}\Psi(\tilde{\mathbf{W}}_{n,s},\rho)
    \leq(1+\varepsilon')\cdot\Psi(\mathbf{W}^*, \rho)$, which shows the method achieves the $(1+\varepsilon')$ relative error. By Theorem \ref{theorem:approx consistency}, we recommend setting the RFF dimension $D$ to $\lceil4\log^3(2k)\rceil$ to ensure the accuracy, which is obtained by setting $\varepsilon$ and $\delta$ to $0.5$ and preserving the main term in the optimal $D$. It should be noted that the optimal $D=\mathrm{poly}(\varepsilon^{-1}\log k)$ is independent of $n$ and $d$ and sublinear to $k$.
    
\subsection{Complexity Analysis}
The computational complexity of RFF-KPKM primarily consists of the following main steps. For generating the RFF mapping, the complexity is $\mathcal{O}(ndD)$. For updating the cluster centroids $\mathbf{\Theta}$, the complexity is $\mathcal{O}(nkD)$. For computing the distance between data points and cluster centroids, the complexity is $\mathcal{O}(nkD)$. In sum, RFF-MKPKM consumes a time complexity of $\mathcal{O}(n(k+d)D)$ in each iteration, greatly reducing the time complexity $\mathcal{O}(n^2k)$ of KPKM\cite{paul2022implicit} when $n$ is large.

\section{Improved Possibilistic RFF-based Multiple Kernel Power $K$-means}
\subsection{Problem Statement}
To solve the noise problem in RFF-KPKM and extend it to multi-view settings, we combine the PCM and the kernel combination method with RFF-KPKM. 
As mentioned before, fuzzy membership functions such as E.q. (\ref{eq:membership function}) are not appropriate to reflect the belonging degrees since they are determined by the data point to all cluster centroids. To obtain a more appropriate membership, a natural thought is to redefine a new membership as variables to be optimized. PCM inspire us to substitute $d_{ij}=\|\tilde\phi(\boldsymbol{x}_{i})-\theta_{j}\|^2$ in RFF-KPKM by the following function:
 \begin{equation}\label{eq:new distance}
     \tilde{d}_{ij}=(u_{ij})^{m}\|\tilde\phi(\boldsymbol{x}_i)-\boldsymbol{\boldsymbol{\theta}}_{j}\|^2
     +(1-u_{ij})^{m}\eta_{j},
 \end{equation}
where $u_{ij}$ represents the possibility membership, $\eta_{j}$ is a regularization hyper-parameter and $m$ controls the fuzziness of the membership degrees. Notably, we can not remove the term $(1-u_{ij})^{m}\eta_{j}$ since it is crucial to avoid the trivial solution and produce the membership that only depends on a point to a cluster.
 Unlike PCM, our method obtains the possibilistic membership while preserving the fuzzy membership $w_{ij}$, which can be obtained naturally through calculating the partial derivative of the objective function. The final cluster centroids are formed as
\begin{equation}
\boldsymbol{\theta}_{j} = \frac{\sum_{i=1}^{n} w_{ij}\left(u_{ij}\right)^m \tilde\phi(x_i)}{\sum_{i=1}^{n} w_{ij}\left(u_{ij}\right)^m}.
\end{equation}
We define the new membership function $\gamma_{ij} = w_{ij}\left(u_{ij}\right)^m$, incorporating both possibilistic and fuzzy memberships. $\gamma_{ij}$ enhances robustness because $u_{ij}$ measures the absolute degree of typicality of a point in any cluster. Furthermore, the constraint  $\sum_{j=1}^{k} w_{ij} = 1$ within $\gamma_{ij}$ prevents identical clusters by ensuring a data point's high fuzzy membership in one cluster necessitates lower memberships in others.

Clustering in multi-view scenarios is becoming increasingly popular and important \cite{zhou2025dynamic,wangevaluate,wang2024view,liang2024mgksite,liang2025from}. To extend to multiple kernel settings, we subsitute $\tilde\phi(\boldsymbol{x}_i)$ by $\tilde{\phi}_{\alpha}(\boldsymbol{x}):= ( \sqrt{\alpha_1} \tilde{\phi}_1(\boldsymbol{x})^{\top}, \ldots, \sqrt{\alpha_L} \tilde{\phi}_L(\boldsymbol{x})^{\top} )^{\top}$, where $\tilde\phi_l(\boldsymbol{x}_i)$ denotes the RFF for $\boldsymbol{x}_{i}$ in the $l$-th view and $\boldsymbol{\alpha} = (\alpha_1, \ldots, \alpha_L)$ act as weights. Finally, by re-parameterizing $\boldsymbol{\theta}_j = ( \frac{1}{\sqrt{\alpha_1}} \theta_{j,1}^\top, \ldots, \frac{1}{\sqrt{\alpha_L}} \theta_{j,L}^\top )^\top$, the objective function of IP-RFF-MKPKM is defined as follows:
\begin{equation}\label{eq:IP-RFF-MKPKM}
\begin{aligned}&\min_{\boldsymbol{\alpha},\mathbf\Theta,\mathbf{U}}f_s(\boldsymbol{\alpha},\mathbf\Theta,\mathbf{U}) =  \sum_{i=1}^{n}\mathcal{M}_s\left(\boldsymbol{\alpha},\boldsymbol{d}_{i}\right)+\lambda\sum_{l=1}^{L}\alpha_l\log\alpha_l\\\ \ \  &
\mathrm{s.t.} \ \ \mathcal{M}_s\left(\boldsymbol{\alpha},\boldsymbol{d}_{i}\right) = M_s\left(\sum_{l=1}^{L}\alpha_l\tilde{d}_{i1,l},\dots,\sum_{l=1}^{L}\alpha_l\tilde{d}_{ik,l}\right),\\&
\ \ \ \ \ \ \ \tilde{d}_{ij,l}=(u_{ij})^{m}\|\tilde\phi_l(\boldsymbol{x}_i)-\boldsymbol{\boldsymbol{\theta}}_{j,l}\|^2 + (1-u_{ij})^{m}\eta_{j,l},\\&
\ \ \ \ \ \ \ \forall j\in\{1,\dots,k\},\ \ \sum_{l=1}^{L}\alpha_l=1.
\end{aligned}
\end{equation}


\subsection{Optimization}
We adopt the MM algorithm to minimize Eq. (\ref{eq:IP-RFF-MKPKM}). First, we utilize the concave property of power means to obtain the surrogate function $g_s(\boldsymbol{\alpha},\mathbf\Theta,\mathbf{U})$ of $f_s(\boldsymbol{\alpha},\mathbf\Theta,\mathbf{U})$. Then we developed an alternating optimization algorithm to optimize $g_s$, which simultaneously ensures the reduction of $f_s$ . Through the tangent plane inequality described in \cite{paul2022implicit}, we can obtain that
\begin{equation}
\begin{aligned}\label{eq:MM update}
&f_{s}(\boldsymbol{\alpha},\mathbf\Theta,\mathbf{U})  \leq f_{s}(\boldsymbol{\alpha}^{(t)},\mathbf\Theta^{(t)},\mathbf{U}^{(t)}) \\
&- \sum_{i=1}^n \sum_{j=1}^k w_{ij}^{(t)} \sum_{l=1}^{L}\alpha_l^{(t)}\tilde{d}_{ij,l}^{(t)} 
-\lambda\sum_{l=1}^{L}\alpha_l^{(t)}\log\alpha_l^{(t)}\\ 
&+ 
\sum_{i=1}^n \sum_{j=1}^k w_{ij}^{(t)}\sum_{l=1}^{L}\alpha_l\tilde{d}_{ij,l}
+\lambda\sum_{l=1}^{L}\alpha_l\log\alpha_l.
\end{aligned}
\end{equation}
Here $w_{ij}$ are obtained from the partial derivatives of $M_s$:

\begin{equation}
            w_{ij}^{(t)} = \frac{\frac{1}{k}(\sum_{l=1}^{L}\alpha_l^{(t)}\tilde{d}_{ij,l}^{(t)})^{(s-1)}}
{(\frac{1}{k}\sum_{c=1}^k (\sum_{l=1}^{L}\alpha_l^{(t)}\tilde{d}_{ic,l}^{(t)})^{s})^{(1-1/s)}},
\end{equation}
where
\begin{equation}
    \tilde{d}_{ij,l}^{(t)}=(u_{ij}^{(t)})^{m}\|\tilde\phi_l(\boldsymbol{x}_i)-\boldsymbol{\boldsymbol{\theta}}_{j,l}^{(t)}\|^2
     +(1-u_{ij}^{(t)})^{m}\eta_{j,l}.
\end{equation}
We now define the entire right-hand side of Inequality (20) as $g_s(\boldsymbol{\alpha},\mathbf\Theta,\mathbf{U})$. We readily observe that the surrogate function $g_s$ satisfies the conditions required by the MM algorithm, so we need to find $(\boldsymbol{\alpha}^{(t+1)},\mathbf\Theta^{(t+1)},\mathbf{U}^{(t+1)})$ such that $g_{s}(\boldsymbol{\alpha}^{(t+1)},\mathbf\Theta^{(t+1)},\mathbf{U}^{(t+1)})\leq g_{s}(\boldsymbol{\alpha}^{(t)},\mathbf\Theta^{(t)},\mathbf{U}^{(t)})$. Ignoring the constant terms, optimizing $g_{s}(\boldsymbol{\alpha},\mathbf\Theta,\mathbf{U})$ is equivalent to solving the following optimization problem when the sum of $\alpha_{l}$ equals $1$:
\begin{equation}\label{eq:J_s}
\min_{\boldsymbol{\alpha},\mathbf\Theta,\mathbf{U}}\sum_{i=1}^n \sum_{j=1}^k w_{ij}^{(t)}\sum_{l=1}^{L}\alpha_l\tilde{d}_{ij,l}+\lambda\sum_{l=1}^{L}\alpha_l\log\alpha_l.
\end{equation}

 To address the optimization problem in Eq. (\ref{eq:J_s}), we propose a three-step alternating iterative algorithm. When optimizing a variable, the other variables are fixed to their previous iteration values. The complete steps are as follows.
 
\noindent\textbf{Step1:} Update $\mathbf{U}$. By substituting $\boldsymbol{\alpha}=\boldsymbol{\alpha}^{(t)}$, $\mathbf{\Theta}=\mathbf{\Theta}^{(t)}$ into Eq. (\ref{eq:J_s}) and removing terms unrelated to $\mathbf{U}$, we obtain the following optimization problem:
\begin{equation}\label{eq:step1}
\begin{aligned}
    \min_{\mathbf{U}}&\sum_{i=1}^n \sum_{j=1}^k w_{ij}^{(t)}
    \left( (u_{ij})^{m}\sum_{l=1}^{L}\alpha_l^{(t)}\|\tilde\phi_l(\boldsymbol{x}_i)-\boldsymbol{\boldsymbol{\theta}}_{j,l}^{(t)}\|^2 \right.
     \\&   +(1-u_{ij})^{m} \left.\sum_{l=1}^{L}\alpha_l^{(t)}\eta_{j,l} \right). 
\end{aligned}
\end{equation}
Setting the derivative of Eq. (\ref{eq:step1}) with respect to 
$u_{ij}$ to zero, we obtain the update rule of $u_{ij}$ as follows:
\begin{equation}
    u_{ij}^{(t+1)} = \frac{1}{1 + \left(\frac{\sum_{l=1}^{L}\alpha_l^{(t)}\|\tilde\phi_l(\boldsymbol{x}_i)-\boldsymbol{\boldsymbol{\theta}}_{j,l}^{(t)}\|^2}{\sum_{l=1}^{L}\alpha_l^{(t)}\eta_{j,l}}\right)^{\frac{1}{m - 1}}}.
\end{equation}

\noindent\textbf{Step2:} Update $\mathbf{\Theta}$. By substituting $\boldsymbol{\alpha}=\boldsymbol{\alpha}^{(t)}$, $\mathbf{U}=\mathbf{U}^{(t+1)}$ into Eq. (\ref{eq:J_s}) and removing terms unrelated to $\mathbf{\Theta}$, we obtain the following optimization problem:
\begin{equation}\label{eq:step2}
    \min_{\mathbf{\Theta}}\sum_{i=1}^n \sum_{j=1}^k w_{ij}^{(t)}
    \left( \left(u_{ij}^{(t+1)}\right)^{m}\sum_{l=1}^{L}\alpha_l^{(t)}\|\tilde\phi_l(\boldsymbol{x}_i)-\boldsymbol{\boldsymbol{\theta}}_{j,l}\|^2
     \right).
\end{equation}
Setting the derivative of Eq. (\ref{eq:step2}) with respect to
$\boldsymbol{\theta}_{j,l}$ to zero, we obtain the update rule of $\boldsymbol{\theta}_{j,l}$ as follows:
\begin{equation}
    \boldsymbol{\theta}_{j,l}^{(t+1)} = \frac{\sum_{i=1}^{n} w_{ij}^{(t)}\left(u_{ij}^{(t+1)}\right)^m \tilde\phi_l(\boldsymbol{x}_i)}{\sum_{i=1}^{n} w_{ij}^{(t)}\left(u_{ij}^{(t+1)}\right)^m}.
\end{equation}

\noindent\textbf{Step3:} Update $\boldsymbol{\alpha}$. By substituting $\mathbf{\Theta}=\mathbf{\Theta}^{(t+1)}$, $\mathbf{U}=\mathbf{U}^{(t+1)}$ into Eq. (\ref{eq:J_s}) and removing terms unrelated to $\mathbf{\Theta}$, we obtain the following optimization problem:
\begin{equation}\label{eq:step3}
\min_{\boldsymbol{\alpha}}\sum_{l=1}^{L}\alpha_l\sum_{i=1}^n \sum_{j=1}^k w_{ij}^{(t)}\tilde{d}_{ij,l}^{(t+1)}+\lambda\sum_{l=1}^{L}\alpha_l\log\alpha_l.
\end{equation}
To minimize Eq. (\ref{eq:step3}) in $\boldsymbol{\alpha}$,  we consider the
 Lagrangian
 \begin{equation}
 \begin{aligned}
    \mathcal{L}(\boldsymbol{\alpha},\beta)=&\sum_{l=1}^{L}\alpha_l\sum_{i=1}^n \sum_{j=1}^k w_{ij}^{(t)}\tilde{d}_{ij,l}^{(t+1)}\\&
    +\lambda\sum_{l=1}^{L}\alpha_l\log\alpha_l-\beta(\sum_{l=1}^{L}\alpha_l-1).
 \end{aligned}
 \end{equation}
 By setting $\frac{\partial \mathcal{L}}{\partial \alpha_l} = 0$ and combining with $\sum_{l=1}^{L}\alpha_l=1$,  we obtain the update rule of $\alpha_l$ as follows:
\begin{equation}
    \alpha_l^{(t+1)} =\frac{\exp\left(-\frac{1}{\lambda} \sum_{i=1}^{n} \sum_{j=1}^{k} w_{ij}^{(t)} \tilde{d}_{ij,l}^{(t+1)}\right)}{\sum_{l=1}^{L} \exp\left(-\frac{1}{\lambda} \sum_{i=1}^{n} \sum_{j=1}^{k} w_{ij}^{(t)} \tilde{d}_{ij,l}^{(t+1)}\right)}.
\end{equation}

\subsection{Complexity Analysis}
The computational complexity of IP-RFF-MKPKM primarily consists of the following main steps. For generating the RFF mapping of the $L$ kernels, the complexity is $\mathcal{O}(ndDL)$. For updating the membership matrix $\mathbf{U}$, the complexity is $\mathcal{O}(nkDL)$. For updating the cluster centroids $\mathbf{\Theta}$, the complexity is $\mathcal{O}(nkDL)$. For computing the distance between data points and cluster centroids, the complexity is $\mathcal{O}(nkDL)$. In sum, IP-RFF-MKPKM consumes a time complexity of $\mathcal{O}(n(k+d)DL)$ in each iteration, greatly reducing the time complexity $\mathcal{O}(n^2kL)$ of MKPKM.

\section{Experiment}
In this section, we mainly discuss the experimental results of the multi-view clustering method IP-RFF-MKPKM due to space constraints. Single-view experiments and parameter settings are provided in the Appendix.
\subsection{Experimental Settings}

\paragraph{Datasets} To evaluate the effectiveness of the proposed RFF-MKPKM, we employ six single-view datasets derived from the first view of six multi-view datasets: Yale \cite{cai2005using}, Caltech101-20 \cite{li2015large}, 100leaves\cite{wang2019gmc}, CIFAR10 \cite{krizhevsky2009learning}, YTF10 and YTF20 \cite{wolf2011face}. To evaluate the effectiveness of the proposed IP-RFF-MKPKM, we use seven widely used multi-view datasets: LGG \cite{cancer2015comprehensive}, Caltech101-7 \cite{li2015large}, HW2 \cite{Kevin2013}, NUS-WIDE-SCENE \cite{chua2009nus}, NUS-WIDE-OBJECT \cite{chua2009nus} and CIFAR10 \cite{krizhevsky2009learning}. Comprehensive details regarding these datasets are accessible in the appendix.

\paragraph{Compared Methods} To contextualize the innovation of RFF-KPKM, we compare the proposed RFF-KPKM against four pivotal baselines: kernel $k$-means \cite{girolami2002mercer}, spectral clustering \cite{von2007tutorial}, power $k$-means \cite{xu2019power}, and kernel power $k$-means \cite{paul2022implicit}. The proposed IP-RFF-MKPKM is evaluated against eight contemporary multi-view clustering (MVC) methods, including: MVASM \cite{han2020multi}, MKPKM \cite{paul2022implicit}, OMVFC-LICAG \cite{zhang2024latent}, LKRGDF \cite{chen2024multiple}, SVD-SMKKM \cite{liang2024consistency}, INMKC \cite{feng2025incremental}, SMKKM-UGF \cite{yang2025smooth}, MVHBG \cite{zhao2025multi}. 


\paragraph{Evaluation} In all experiments, we use RFF of dimension $\lceil4\log^3(2k)\rceil$ according to Theorem \ref{theorem:approx consistency}. For statistical robustness, every trial was repeated 20 times with average performance metrics. Clustering quality was evaluated through three standard criteria: Accuracy (ACC), Normalized Mutual Information (NMI), and Purity. The computational environment comprised an Intel Core i9-10900X CPU, 64GB RAM, and MATLAB 2020b (64-bit).


\begin{table*}[ht]
\centering
\scalebox{0.81}{\begin{tabular}{ccccccccccc}
\hline
\multirow{2}{*}{Dataset}                        & \multirow{2}{*}{Metric }       & \multirow{2}{*}{MVASM }        & \multirow{2}{*}{MKPKM}      & \multirow{-1}{*}{OMVFC-} & \multirow{2}{*}{LKRGDF} &\multirow{2}{*}{SVD-SMKKM}   & \multirow{2}{*}{SMKKM-UGF}           & \multirow{2}{*}{MVHBG}   & \multirow{2}{*}{INMKC}     & \multirow{2}{*}{Proposed}  \\ 
& & & &\multirow{-1}{*}{LICAG} &  &&&&
\\
\hline
                               & ACC          &59.25	&58.05 &57.30	&43.45	&69.29		&50.94	&61.42		&71.16 &\textbf{80.90}
                               \\
                               & NMI          &33.02	&29.22 &27.33	&0.07		&36.61&18.30	&22.90		&37.11	 &\textbf{45.12}
                              \\
                            \multirow{-3}{*}{LGG}  & Purity       &51.23 &62.17	&59.93	&50.19	&69.29		&55.81	&61.80		&71.16 &\textbf{80.90}
                        \\ \hline
                               & ACC           &61.29	&47.69 &41.99	&52.17	&51.42		&44.52	&52.24		&36.50 &\textbf{69.61}
                                 \\
                               & NMI        &50.54	&45.74 &39.58	&36.65	&36.37		&50.48	&43.88		&28.85 &\textbf{51.63}
                                 \\
                        \multirow{-3}{*}{Caltech101-7}  & Purity &82.89	&83.24 &81.61	&84.33	&83.38		&83.72	&84.80	&80.94 &\textbf{85.89}
                                \\ \hline
                               & ACC          &73.23	&86.54	&72.75	&57.00 	&62.5	&78.57	&62.6	&72.35	&\textbf{90.15}
                       \\
                               & NMI           &73.77	&75.06 &70.25	&66.69	&56.57		&81.01	&69.28		&68.45 &\textbf{81.56}
                           \\
\multirow{-3}{*}{HW2}      & Purity        &65.03	&85.12 &74.70	&64.45	&65.90		&81.37	&67.50		&74.80 &\textbf{90.15} 
                        \\ \hline
                               & ACC           &34.93	&44.51 &38.60	&39.10	&43.42		&37.21	&\textbf{58.42}		&33.65 &54.19
                        \\
                               & NMI           &43.56	&61.11 &47.71	&39.99	&43.87		&55.36	&60.64	&40.25	&\textbf{61.61}
                        \\
                            \multirow{-3}{*}{Caltech101-20}         & Purity        &52.27	&76.15 &68.11	&65.21	&67.73		&73.31	&74.73	&64.63 &\textbf{76.82}
                           \\ \hline
                               & ACC         &11.22 &11.61	&9.26	&14.68	&17.12		&11.53	&20.19	&9.74	&\textbf{22.08}
                        \\
 \multirow{-2}{*}{ NUS-WIDE-}                      & NMI        &8.81	&8.70 &8.34	&9.64	&8.28		&9.43	&8.52	&7.11	&\textbf{10.00}  
                        \\
\multirow{-2}{*}{SCENE}     & Purity        &17.15	&28.89 &30.60	&33.11	&33.80	&\textbf{34.13}	&30.60	&32.94	&31.99
                           \\ \hline
                               & ACC          &13.47 &- &11.17	&-	&14.10	&-	&-	&12.26 &\textbf{21.12}
                        \\
\multirow{-2}{*}{NUS-WIDE-}                              & NMI          &12.40	&- &8.13	&-	&10.50 		&-	&-	&9.27	&\textbf{13.53}
                          \\
\multirow{-2}{*}{OBJECT}   & Purity       &9.80 &-	&21.02 	&-	&22.54 	 	&-	&-		&22.01 &\textbf{24.30}
                        \\ \hline
                            & ACC          &- &- &-	&-	&25.46	&-	&-	&20.61 &\textbf{27.88}
                        \\
  & NMI          &-	&- &-	&-	&12.32		&-	&-	&7.63	&\textbf{14.91}
                          \\
\multirow{-3}{*}{CIFAR10}   & Purity       &- &-	&- 	&-	&25.78	 	&-	&-		&21.88 &\textbf{28.56}\\ \hline
\end{tabular}}
\caption{ Clustering performance comparison of IP-RFF-MKPKM with eight baselines on seven benchmark datasets concerning clustering accuracy (ACC), normalized mutual information (NMI), and Purity. Maximum values are highlighted in boldface; the hyphen symbol (-) designates execution termination due to memory overallocation.}
    \label{tab:result comparison}
\end{table*}

\begin{table*}
    \centering
\scalebox{0.85}{\begin{tabular}{ccccccccc}
        \toprule
        \multirow{2}{*}{Metrics} & \multirow{2}{*}{Method} & \multicolumn{7}{c}{Datasets} \\ \cmidrule{3-9}
        & &LGG & Caltech101-7 & HW2 & Caltech101-20 & NUS-WIDE-SCENE & NUS-WIDE-OBJECT &CIFAR10   \\
        \midrule
        \multirow{2}{*}{ACC} & RFF-MKPKM & 62.55 & 61.67 & 83.00 & 53.39 & 12.41 &13.95 &26.08 \\
            & IP-RFF-MKPKM & \textbf{80.90} & \textbf{69.61} & \textbf{90.15} & \textbf{54.19} & \textbf{22.08}  & \textbf{21.12} & \textbf{27.88}\\
        \bottomrule
    \end{tabular}}
    \caption{ACC comparison with and without the possibilistic method.}
    \label{Comparison of Results With and Without the Possibilistic Method}
\end{table*}

\subsection{Experimental Results}
Table \ref{tab:result comparison} shows the clustering outcomes of IP-RFF-MKPKM on the seven benchmark datasets, and Table \ref{Comparison of Results With and Without the Possibilistic Method} displays the results with and without the possibilistic method. Furthermore, Fig. \ref{running} compares the running time of IP-RFF-MKPKM with other state-of-the-art methods. Analysis of the experimental findings yields the following conclusions:
\begin{enumerate}
    \item Our proposed IP-RFF-MKPKM algorithm generally outperforms existing MKC methods on most datasets. In terms of ACC, IP-RFF-MKPKM surpasses the second-best algorithm on the LGG, Caltech101-7, HW2, NUS-WIDE-OBJECT, and CIFAR10 datasets by margins of 9.74\%, 8.32\%, 3.61\%, 7.02\%, 2.42\% respectively. 
    \item When compared to the original MKPKM method, our approach generally achieves superior performance. On the datasets LGG, Caltech101-7, HW2, Caltech101-20, NUS-WIDE-SCENE, IP-RFF-MKPKM consistently outperforms MKPKM by 22.85\%, 21.92\%, 3.61\%, 9.68\%, and 10.47\% in terms of ACC. To demonstrate the effectiveness of the proposed possibilistic method, we conducted experiments on RFF-MKPKM by replacing $\tilde{d}_{ij}$ in IP-RFF-MKPKM with $d_{ij}$. As shown in Table \ref{Comparison of Results With and Without the Possibilistic Method}, the possibilistic method significantly improves the clustering performance of RFF-MKPKM.
    \item As shown in Fig. \ref{running}, IP-RFF-MKPKM can run on the CIFAR dataset containing $60000$ samples, while six methods cannot since memory limitation. When compared to the powerful and efficient MVC method, INMKC, IP-RFF-MKPKM surpasses it by 9.74\%, 33.11\%, 17.8\%, 20.54\%, 8.86\%, and 7.27\% in terms of ACC across all datasets. The results demonstrate the superior scalability and performance of our method.
\end{enumerate}

\subsection{Convergence and Sensitivity Analysis}
We conduct an experimental series elucidating IP-RFF-MKPKM's convergence characteristics. As depicted in Fig. \ref{Convergence study}, the objective function value monotonically decreases per iteration, achieving convergence at approximately 30 iterations on Caltech101-7. We also examine the sensitivity of the parameters $s_0$ and $\lambda$. As illustrated in Fig. \ref{Sensitive Study}, the proposed method is not significantly affected by $\lambda$ within [1, 1000], and performs better when $s_0$ is close to 15 on the Caltech101-7 dataset. The results show that the addition of an entropy regularization term facilitates learning smoother combination coefficients, and the adjustment of $s_0$ helps to find the optimal global solution of IP-RFF-MKPKM.

\begin{figure}[htbp]
	\centering
	\begin{subfigure}{0.49\linewidth}
		\centering
		\includegraphics[width=0.9\linewidth]{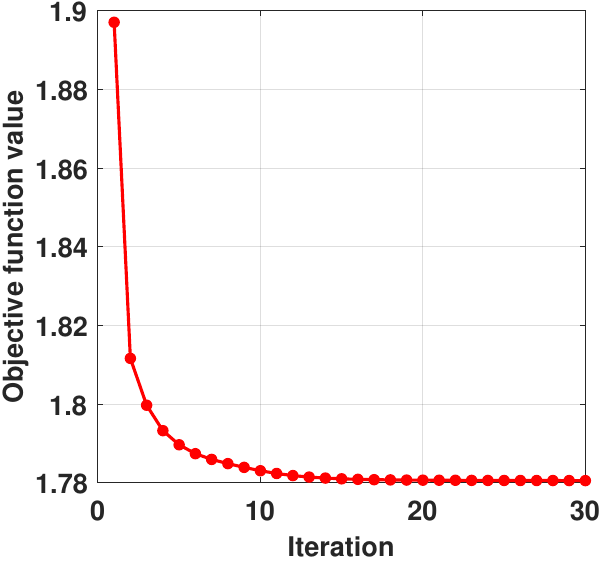}
		\caption{Convergence study}
		\label{Convergence study}
	\end{subfigure}
	\centering
	\begin{subfigure}{0.49\linewidth}
		\centering
		\includegraphics[width=0.9\linewidth]{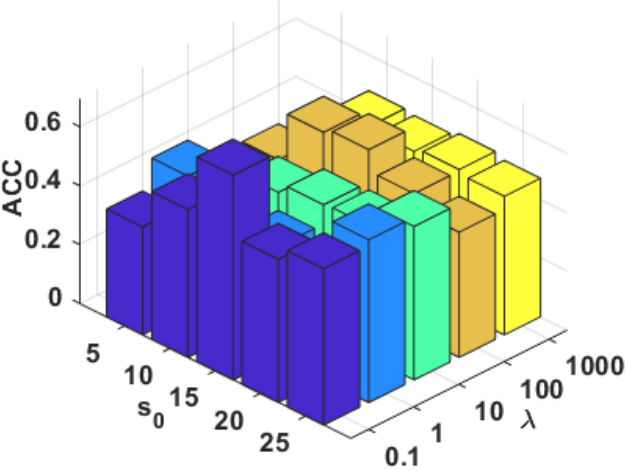}
		\caption{Study for $s_0$ and $\lambda$}
		\label{Sensitive Study}
	\end{subfigure}
	\caption{Convergence analysis and sensitivity analysis of $s_0$ and $\lambda$ of IP-RFF-MKPKM on Caltech101-7 dataset. Convergence and sensitivity studies on other benchmark datasets are given in the Appendix.}
\end{figure}

\subsection{Approximation Analysis}
To evaluate the impact of RFF dimensions on clustering accuracy, we tested the performance of RFF-KPKM across dimensions ranging from 5 to 100 on six datasets. As shown in Fig. \ref{acc_vs_dim}, below 40 RFF dimensions, dimensionality significantly impacts clustering accuracy (ACC); beyond 40 dimensions, ACC gains diminish and gradually approach stability. An RFF dimensionality of 20 to 40 represents the computational sweet spot, delivering $\geq85$\% of the maximum achievable accuracy for the datasets.


\begin{figure}[H]
    \centering
    \includegraphics[width=0.38\textwidth]{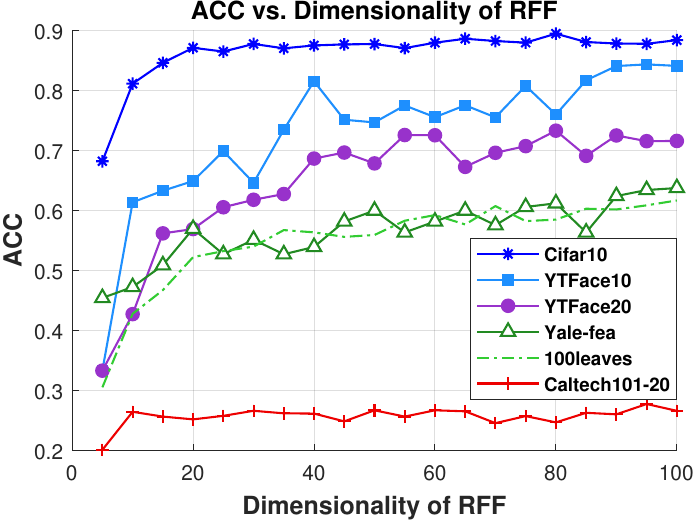}
    \caption{Impact of RFF dimension on RFF-KPKM clustering accuracy on six benchmark datasets. The  RFF dimension varies from 5 to 100.}
    \label{acc_vs_dim}
\end{figure}

\section{Conclusion}
This work utilizes random Fourier features (RFF) and a probabilistic method to address the time complexity and noise issue of KPKM. We demonstrate the effectiveness of RFF-KPKM by establishing its excess risk bound, strong consistency, and relative error, which establishes the first theoretical guarantees for the combination of RFF and KPKM. Building on this, we design IP-RFF-MKPKM by integrating possibilistic memberships and fuzzy memberships to redefine cluster affiliations, thereby eliminating outlier-induced centroid drift and extending scalability to multi-view settings. We conduct experiments on six benchmark single-view datasets and seven benchmark multi-view datasets and compare the proposed methods with state-of-the-art baseline methods. Experimental results demonstrate the effectiveness and efficiency of the proposed RFF-KPKM and IP-RFF-MKPKM.

\begin{figure}[t]
    \centering
    \includegraphics[width=0.47\textwidth]{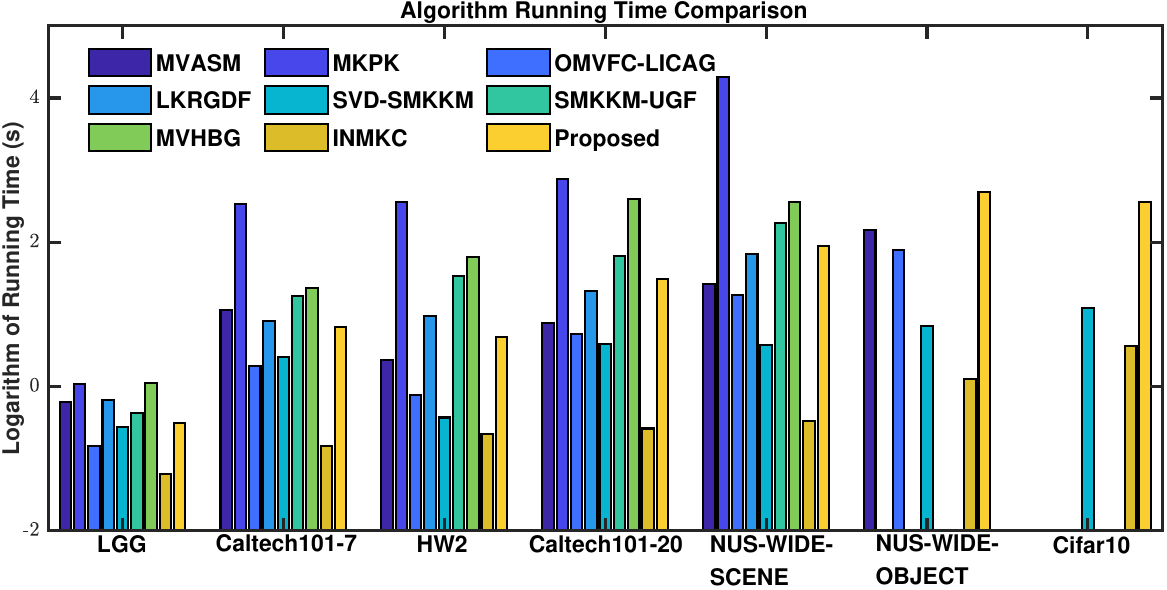}
    \caption{Logarithmic running time comparison of IP-RFF-MKPKM with eight benchmark methods on seven benchmark datasets. Bar absence denotes an out-of-memory runtime exception during method execution.}
    \label{running}
\end{figure}

\section{Acknowledgments}
This work is supported by the National Science Fund for Distinguished Young Scholars of China (No. 62325604), and the National Natural Science Foundation of China (No. 62441618, 62276271, 62506369).

\bibliography{main}

\onecolumn
\section{A. Proof of Theorems}
\subsection{A.1 Proof of Theorem 1}

To prove Theorem 1, we first note that
\begin{align*}
    & \mathbb{E}[\mathcal{L}_s(\tilde {\mathbf{W}}_{n,s}, \rho)] - {\mathcal{L}}_s^*(\rho)\\
    &=  \underbrace{\mathbb{E}[\mathcal{L}_s(\tilde{\mathbf{W}}_{n,s}, \rho) - \mathcal{L}_s(\tilde{\mathbf{W}}_{n,s}, \rho_n)}_{A_1}] + \underbrace{\mathbb{E}[\mathcal{L}_s(\tilde{ \mathbf{W}}_{n,s}, \rho_n) - \mathcal{L}_s( \mathbf{W}_{n,s}, \rho_n)]}_{A_2} \\
    & + \underbrace{\mathbb{E}[\mathcal{L}_s( \mathbf{W}_{n,s}, \rho_n) - \mathcal{L}_s( \mathbf{W}_{n,s}, \rho)]}_{A_3} + \underbrace{\mathbb{E}[\mathcal{L}_s( \mathbf{W}_{n,s}, \rho)] - \mathcal{L}_s^*(\rho)}_{A_4},
\end{align*}
and then we proceed to analyze the four error terms individually to establish their respective bounds.

\begin{lemma}\cite{paul2022implicit}\label{lemma1}
Assume there exists  M > 0  such that  $\| \boldsymbol{\theta} \| \leq M $ for all $ \boldsymbol{\theta} \in \mathrm{conv}(\phi(C))$.
With probability at least $1-\delta$,
\begin{equation}
    \begin{aligned}
        &\sup_{\mathbf{W} \in \mathcal{A}} \left| \mathcal{L}_s(\mathbf{W}, \rho) - \mathcal{L}_s(\mathbf{W}, \rho_n) \right|\leq 2\sqrt{2} Mk^{3/2 - 1/s} \left( 2\sqrt{\mathbb{E}_{x \sim P} \| \phi(x) \|^2 + M} \right) \frac{1}{\sqrt{n}}
        + 4M^2 \sqrt{\frac{\log(2/\delta)}{2n}}.
    \end{aligned}
\end{equation}
\end{lemma}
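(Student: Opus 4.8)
The plan is to prove this uniform deviation bound by the classical symmetrization-plus-contraction argument, the only nonroutine ingredient being a sharp Euclidean Lipschitz bound on the power mean $M_s$.

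\textbf{Reduction and concentration.} First I would eliminate the sample-dependence of the centroids. By the reformulation of \cite{paul2022implicit}, every feasible $\mathbf{W}$ produces centroids $\boldsymbol{\theta}_j=\mathbf{\Phi}\mathbf{W}^{(j)}\in\mathrm{conv}(\phi(S_n))\subseteq\mathrm{conv}(\phi(C))$, so
\begin{equation*}
\sup_{\mathbf{W}\in\mathcal{A}}\bigl|\mathcal{L}_s(\mathbf{W},\rho)-\mathcal{L}_s(\mathbf{W},\rho_n)\bigr|
\;\le\; \sup_{\boldsymbol{\theta}_1,\dots,\boldsymbol{\theta}_k\in\mathrm{conv}(\phi(C))}\Bigl|\,\mathbb{E}_{\boldsymbol{x}\sim\rho}\bigl[g_{\boldsymbol{\Theta}}(\boldsymbol{x})\bigr]-\tfrac1n\textstyle\sum_{i=1}^n g_{\boldsymbol{\Theta}}(\boldsymbol{x}_i)\,\Bigr| \;=:\; Z,
\end{equation*}
where $g_{\boldsymbol{\Theta}}(\boldsymbol{x})=M_s(\|\phi(\boldsymbol{x})-\boldsymbol{\theta}_1\|^2,\dots,\|\phi(\boldsymbol{x})-\boldsymbol{\theta}_k\|^2)$ and the parameter set $(\mathrm{conv}(\phi(C)))^k$ is now fixed, independent of $S_n$. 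Because a power mean never exceeds the largest of its arguments, $0\le g_{\boldsymbol{\Theta}}(\boldsymbol{x})\le\max_j\|\phi(\boldsymbol{x})-\boldsymbol{\theta}_j\|^2\le 4M^2$; hence changing one sample point moves $Z$ by at most $4M^2/n$, and McDiarmid's inequality gives $Z\le\mathbb{E}[Z]+4M^2\sqrt{\log(1/\delta)/2n}$ with probability $\ge 1-\delta$ (this is the second term; a further union bound accounts for the $\log(2/\delta)$ as stated).

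\textbf{Bounding $\mathbb{E}[Z]$.} Symmetrization gives $\mathbb{E}[Z]\le 2\,\mathbb{E}\bigl[\widehat{R}_n(\mathcal{G})\bigr]$ with $\mathcal{G}=\{g_{\boldsymbol{\Theta}}\}$. Since $g_{\boldsymbol{\Theta}}(\boldsymbol{x})=M_s\bigl(u_1^{\boldsymbol{\theta}_1}(\boldsymbol{x}),\dots,u_k^{\boldsymbol{\theta}_k}(\boldsymbol{x})\bigr)$ with $u_j^{\boldsymbol{\theta}}(\boldsymbol{x})=\|\phi(\boldsymbol{x})-\boldsymbol{\theta}\|^2$, Maurer's vector-contraction inequality peels off $M_s$ at the price of its Euclidean Lipschitz constant $L$ and a factor $\sqrt{2}$, and the resulting supremum decouples over the $k$ coordinates, yielding $\widehat{R}_n(\mathcal{G})\le\sqrt{2}\,L\,k\,\widehat{R}_n(\mathcal{U})$ where $\mathcal{U}=\{\boldsymbol{x}\mapsto\|\phi(\boldsymbol{x})-\boldsymbol{\theta}\|^2:\|\boldsymbol{\theta}\|\le M\}$. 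The crucial calculation is that $L=k^{1/2-1/s}$: the partial derivatives of $M_s$ are the membership weights $w_j=\frac1k M_s^{\,1-s}u_j^{\,s-1}$, and combining $M_s(\boldsymbol{u})\le k^{-1/s}\min_j u_j$ (a consequence of $s<0$) with the power-mean inequality at exponent $2(s-1)$ gives $\sum_j w_j^2=\frac1k M_s^{\,2(1-s)}\bigl(\tfrac1k\sum_j u_j^{\,2(s-1)}\bigr)\le k^{\,1-2/s}$, i.e.\ $\|\nabla M_s\|_2\le k^{1/2-1/s}$, and this bound extends continuously to the boundary of the positive orthant. Finally, expanding $\|\phi(\boldsymbol{x})-\boldsymbol{\theta}\|^2=\|\phi(\boldsymbol{x})\|^2-2\langle\phi(\boldsymbol{x}),\boldsymbol{\theta}\rangle+\|\boldsymbol{\theta}\|^2$ and using $\mathbb{E}_{\boldsymbol{\sigma}}\bigl\|\sum_i\sigma_i\phi(\boldsymbol{x}_i)\bigr\|\le\bigl(\sum_i\|\phi(\boldsymbol{x}_i)\|^2\bigr)^{1/2}$ gives $\mathbb{E}[\widehat{R}_n(\mathcal{U})]\le\bigl(2M\sqrt{\mathbb{E}_{\boldsymbol{x}}\|\phi(\boldsymbol{x})\|^2}+M^2\bigr)/\sqrt{n}$. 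Chaining these bounds produces the first term, of order $M k^{3/2-1/s}\bigl(2\sqrt{\mathbb{E}\|\phi\|^2}+M\bigr)/\sqrt{n}$, and adding the concentration term completes the proof.

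\textbf{Main obstacle.} The delicate point is the Lipschitz estimate with the exact exponent $1/2-1/s$: the crude bound $\|\nabla M_s\|_1\le k^{1-1/s}$ (which is what one gets treating $M_s$ as merely $\ell_\infty$-Lipschitz) would cost an extra $\sqrt{k}$ and overshoot the stated $k^{3/2-1/s}$, so one really needs the $\ell_2$ bound, whose proof hinges on the power-mean inequality $M_{2(s-1)}\ge k^{1/s}M_s$ and on verifying that $\nabla M_s$ stays bounded as some argument $u_j\to 0$ (there $M_s\to0$ while $w_j\to k^{-1/s}$, so there is no blow-up). Symmetrization, McDiarmid, and the linear Rademacher complexity of $\mathcal{U}$ are all routine.
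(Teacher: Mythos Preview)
The present paper does not prove this lemma; it is quoted directly from \cite{paul2022implicit} and used as a black box to bound $A_1$, $A_3$, and $A_4$ in the excess-risk decomposition for Theorem~\ref{theorem: theorem1}. Your reconstruction is the standard route for such a bound and is correct: the passage from $\mathbf{W}$ to sample-independent centroids in $(\mathrm{conv}(\phi(C)))^k$, McDiarmid for the fluctuation term, symmetrization plus Maurer's vector contraction with your sharp $\ell_2$-Lipschitz constant $\|\nabla M_s\|_2\le k^{1/2-1/s}$ (the derivation via $M_s\le k^{-1/s}\min_j u_j$ and the boundary check as $u_j\to 0$ are both right), and the linear Rademacher bound for $\{\boldsymbol{x}\mapsto\|\phi(\boldsymbol{x})-\boldsymbol{\theta}\|^2:\|\boldsymbol{\theta}\|\le M\}$ all go through, and your observation that the cruder $\ell_1$ bound would overshoot by $\sqrt{k}$ correctly identifies why the $\ell_2$ estimate is essential for the stated exponent $k^{3/2-1/s}$.
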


\paragraph{Remark: }
The assumption is easy to satisfy in the Gaussian kernel scenario since we have $\|\phi(\boldsymbol{x})\|=1$. Lemma \ref{lemma1} immediately implies that $A_1$ and $A_3\leq \mathcal{O}(\sqrt{{k^{3-2/s}}/{n}})$. For $A_4$, we just need to note that $A_4\leq 2\sup_{\mathbf{W} \in \mathcal{A}} \left| \mathcal{L}_s(\mathbf{W}, \rho) - \mathcal{L}_s(\mathbf{W}, \rho_n) \right|\leq \mathcal{O}(\sqrt{{k^{3-2/s}}/{n}})$ according to the proof of Theorem 5 in \cite{paul2022implicit}. 
Now, we focus on bounding $A_2$. We first establish the relative error between $k(\boldsymbol{x},\boldsymbol{y})$ and the inner product of Random Fourier Features. The following lemma extends Claim 1 in \cite{rahimi2007random}, which shows the additive error of RFF, to a multiplicative error scheme under the assumption of a Gaussian kernel.
\begin{lemma}\label{lemma:relative error of RFF}
    Assume there exists a constant $C > 0$ such that $\|\boldsymbol{x}\| < C$ for all $\boldsymbol{x}\in S_n$. For Gaussian kernel $k(\boldsymbol{x},\boldsymbol{y})=exp(\|\boldsymbol{x}-\boldsymbol{y}\|^2/(2\sigma^2))$ and RFF map $\tilde\phi:\mathbb{R}^{d}\to\mathbb{R}^{D}$ with $D=\Omega\left(\frac{d}{\varepsilon^2} \log \frac{1}{\varepsilon \sqrt{\delta}}\right)$, with probability at least $1-\delta$, we have
    \begin{equation}
        \sup_{\boldsymbol{x},\boldsymbol{y} \in S_n} |\langle\tilde\phi(\boldsymbol{x}),\tilde\phi(\boldsymbol{y})\rangle - k(\boldsymbol{x},\boldsymbol{y})|\leq \mathcal{O}(\varepsilon)k(\boldsymbol{x},\boldsymbol{y}).
    \end{equation}
\end{lemma}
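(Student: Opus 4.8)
\textbf{Proof proposal for Lemma \ref{lemma:relative error of RFF}.}

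The plan is to upgrade the standard additive concentration bound for RFF (Claim 1 of \cite{rahimi2007random}) into a multiplicative one by exploiting the fact that the Gaussian kernel is strictly bounded below on a compact set. First I would observe that, since $S_n$ lies in the ball of radius $C$, for any $\boldsymbol{x},\boldsymbol{y}\in S_n$ we have $\|\boldsymbol{x}-\boldsymbol{y}\|\le 2C$, hence
\begin{equation}
k(\boldsymbol{x},\boldsymbol{y}) = \exp\!\left(-\frac{\|\boldsymbol{x}-\boldsymbol{y}\|^2}{2\sigma^2}\right) \ge \exp\!\left(-\frac{2C^2}{\sigma^2}\right) =: \kappa_{\min} > 0 .
\end{equation}
This uniform lower bound is the whole point: an additive error of size $\mathcal{O}(\varepsilon')$ automatically becomes a relative error of size $\mathcal{O}(\varepsilon'/\kappa_{\min})$, and since $\kappa_{\min}$ is a constant depending only on $C$ and $\sigma$, absorbing it into the $\mathcal{O}(\cdot)$ is harmless.

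Next I would invoke the uniform additive guarantee for RFF over the compact set. The standard argument covers the difference set $\Delta := \{\boldsymbol{x}-\boldsymbol{y} : \boldsymbol{x},\boldsymbol{y}\in S_n\}$ (contained in a ball of radius $2C$) by an $\epsilon$-net, controls the pointwise deviation $g(\boldsymbol{z}) := \langle\tilde\phi(\boldsymbol{x}),\tilde\phi(\boldsymbol{y})\rangle - k(\boldsymbol{x},\boldsymbol{y})$ at each net point via Hoeffding's inequality (each summand $\cos(\boldsymbol{\omega}_i^\top(\boldsymbol{x}-\boldsymbol{y}))$ is bounded), and bounds the Lipschitz constant of $g$ using $\mathbb{E}\|\boldsymbol{\omega}\|^2 = d/\sigma^2$ together with Markov's inequality. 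The union bound over the net then yields: for $D = \Omega\!\left(\frac{d}{(\varepsilon')^2}\log\frac{1}{\varepsilon'\sqrt{\delta}}\right)$, with probability at least $1-\delta$,
\begin{equation}
\sup_{\boldsymbol{x},\boldsymbol{y}\in S_n} \bigl|\langle\tilde\phi(\boldsymbol{x}),\tilde\phi(\boldsymbol{y})\rangle - k(\boldsymbol{x},\boldsymbol{y})\bigr| \le \varepsilon' .
\end{equation}
Setting $\varepsilon' = \kappa_{\min}\,\varepsilon$ and combining with the lower bound above gives $|g(\boldsymbol{z})| \le \varepsilon' = \kappa_{\min}\varepsilon \le \varepsilon\, k(\boldsymbol{x},\boldsymbol{y})$ for every pair, which is the claimed bound up to the constant hidden in $\mathcal{O}(\varepsilon)$; the sample-complexity requirement $D=\Omega\!\left(\frac{d}{\varepsilon^2}\log\frac{1}{\varepsilon\sqrt{\delta}}\right)$ is unchanged because $\kappa_{\min}$ is constant.

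The main obstacle I anticipate is not conceptual but bookkeeping: one must verify that replacing $\varepsilon$ by $\kappa_{\min}\varepsilon$ throughout the net-covering argument does not introduce a hidden dependence on $\sigma$ or $C$ that would spoil the clean $\Omega\!\left(\frac{d}{\varepsilon^2}\log\frac{1}{\varepsilon\sqrt{\delta}}\right)$ rate — in particular, the net radius, the Lipschitz bound, and the diameter $2C$ of $\Delta$ all enter logarithmically or as constants, so they can be safely absorbed, but this needs to be checked carefully. A secondary subtlety is that the conclusion is stated with $k(\boldsymbol{x},\boldsymbol{y})$ appearing inside the supremum on the right-hand side, which is slightly informal; the precise reading is $\sup_{\boldsymbol{x},\boldsymbol{y}}|\langle\tilde\phi(\boldsymbol{x}),\tilde\phi(\boldsymbol{y})\rangle - k(\boldsymbol{x},\boldsymbol{y})|/k(\boldsymbol{x},\boldsymbol{y}) \le \mathcal{O}(\varepsilon)$, and I would phrase the proof to deliver exactly that.
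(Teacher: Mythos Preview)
Your proposal is correct and follows essentially the same route as the paper: invoke the additive uniform bound of \cite{rahimi2007random} (Claim~1) with the stated $D$, then divide through by a uniform lower bound on $k(\boldsymbol{x},\boldsymbol{y})$ obtained from the boundedness of $S_n$. The only cosmetic difference is that the paper lower-bounds $e^{-\|\boldsymbol{x}-\boldsymbol{y}\|^2/(2\sigma^2)}$ via the linearization $e^{-t}\ge 1-t$ rather than your direct $\kappa_{\min}=\exp(-2C^2/\sigma^2)$; your version is in fact slightly cleaner since it avoids the implicit requirement that the linearized bound be positive.
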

\begin{proof}
According to Claim 1 in \cite{rahimi2007random}, with probability at least $1-\delta$, we have
    \begin{equation}
        \sup_{\boldsymbol{x},\boldsymbol{y} \in S_n} |\tilde\phi(\boldsymbol{x})^{\top} \tilde\phi(\boldsymbol{y}) - k(\boldsymbol{x},\boldsymbol{y})|\leq \varepsilon,
    \end{equation}
    then
    \begin{equation}\label{lemma2:eq1}
        \sup_{\boldsymbol{x},\boldsymbol{y} \in S_n}\frac{|\tilde\phi(\boldsymbol{x})^{\top} \tilde\phi(\boldsymbol{y}) - k(\boldsymbol{x},\boldsymbol{y})|}{k(x,y)}\leq \frac{\varepsilon}{k(x,y)}=\frac{\varepsilon}{e^{-\frac{\|\boldsymbol{x}-\boldsymbol{y}\|^2}{2\sigma^2}}}.
    \end{equation}
    Since $\|\boldsymbol{x}\|\leq c$ and $\|\boldsymbol{x}-\boldsymbol{y}\|^2\leq 2c$ , we have
    \begin{equation}\label{lemma2:eq2}
        e^{-\frac{\|\boldsymbol{x}-\boldsymbol{y}\|^2}{2}}\geq 1-\frac{\|\boldsymbol{x}-\boldsymbol{y}\|^2}{2\sigma^2}\geq 1-\frac{c}{\sigma^2}.
    \end{equation}
    By substituting (\ref{lemma2:eq2}) into (\ref{lemma2:eq1}), we obtain
        \begin{equation}
         \sup_{\boldsymbol{x},\boldsymbol{y} \in S_n}\frac{|\tilde\phi(\boldsymbol{x})^{\top} \tilde\phi(\boldsymbol{y}) - k(\boldsymbol{x},\boldsymbol{y})|}{k(\boldsymbol{x},\boldsymbol{y})}\leq \frac{\varepsilon}{e^{-\frac{\|\boldsymbol{x}-\boldsymbol{y}\|^2}{2\sigma^2}}}
         \leq \frac{\varepsilon}{1-\frac{c}{\sigma^2}}=O(\varepsilon).   
    \end{equation}
\end{proof}
Through Lemma. \ref{lemma:relative error of RFF}, we can further derive the following lemma, showing the additive error bound when applying RFF in KPKM.
\begin{lemma}
For RFF map $\tilde\phi:\mathbb{R}^{d}\to\mathbb{R}^{D}$ with $D=\Omega\left(\frac{d}{\varepsilon^2} \log \frac{1}{\varepsilon \sqrt{\delta}}\right)$, with probability at least $1-\delta$, we have
\begin{equation}
     \left|\tilde{\mathcal{L}}_s(\mathbf{W},\rho_n)-\mathcal{L}_s(\mathbf{W},\rho_n)\right|\leq\mathcal{O}( k^{1-1/s}\varepsilon).
\end{equation}
\end{lemma}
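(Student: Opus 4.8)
The plan is to bound the discrepancy pointwise and then average, using two ingredients: the relative (hence additive, for the Gaussian kernel) accuracy of RFF from Lemma~\ref{lemma:relative error of RFF}, and a Lipschitz estimate for the power mean $M_s$. Write $\boldsymbol d_i^2=(d_{i1}^2,\dots,d_{ik}^2)$ and $\tilde{\boldsymbol d}_i^2=(\tilde d_{i1}^2,\dots,\tilde d_{ik}^2)$, so that $\tilde{\mathcal L}_s(\mathbf W,\rho_n)-\mathcal L_s(\mathbf W,\rho_n)=\frac1n\sum_{i=1}^n\bigl(M_s(\tilde{\boldsymbol d}_i^2)-M_s(\boldsymbol d_i^2)\bigr)$; by the triangle inequality it suffices to bound $\max_i\lvert M_s(\tilde{\boldsymbol d}_i^2)-M_s(\boldsymbol d_i^2)\rvert$. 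First I would control each coordinate: expanding $d_{ij}^2=\lVert\phi(\boldsymbol x_i)-\sum_l W_{lj}\phi(\boldsymbol x_l)\rVert^2$ gives $d_{ij}^2=k(\boldsymbol x_i,\boldsymbol x_i)-2\sum_l W_{lj}k(\boldsymbol x_i,\boldsymbol x_l)+\sum_{l,l'}W_{lj}W_{l'j}k(\boldsymbol x_l,\boldsymbol x_{l'})$, and $\tilde d_{ij}^2$ is the identical expression with every $k(\cdot,\cdot)$ replaced by $\langle\tilde\phi(\cdot),\tilde\phi(\cdot)\rangle$. Subtracting, and using Lemma~\ref{lemma:relative error of RFF} together with $0<k(\cdot,\cdot)\le1$ for the Gaussian kernel, each kernel entry is perturbed by at most $\mathcal O(\varepsilon)$ on the high-probability event of that lemma; since the $\mathbf W$ of interest (the ERMs, whose centroids lie in $\mathrm{conv}(\phi(S_n))$ by Theorem~1 of \cite{paul2022implicit}) have columns in the probability simplex, $\sum_l\lvert W_{lj}\rvert=1$, so $\lvert d_{ij}^2-\tilde d_{ij}^2\rvert\le\mathcal O(\varepsilon)\bigl(1+\sum_l\lvert W_{lj}\rvert\bigr)^2=\mathcal O(\varepsilon)$ for all $i,j$, hence $\lVert\boldsymbol d_i^2-\tilde{\boldsymbol d}_i^2\rVert_1\le\mathcal O(k\varepsilon)$.

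Second, I would establish a Lipschitz estimate for $M_s$ along the segment from $\boldsymbol d_i^2$ to $\tilde{\boldsymbol d}_i^2$. Both endpoints have strictly positive coordinates (no mapped point coincides with a centroid, which is generic and in any case preserved by the iterations), so the whole segment stays in $\mathbb R_{>0}^k$, where $M_s$ is smooth; the mean value theorem then gives $\lvert M_s(\tilde{\boldsymbol d}_i^2)-M_s(\boldsymbol d_i^2)\rvert\le\bigl(\sup_{\boldsymbol\xi\in\mathbb R_{>0}^k}\lVert\nabla M_s(\boldsymbol\xi)\rVert_\infty\bigr)\lVert\boldsymbol d_i^2-\tilde{\boldsymbol d}_i^2\rVert_1$. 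The partial derivatives of $M_s$ are exactly the fuzzy memberships, $\partial M_s/\partial y_j=\frac1k\bigl(\frac1k\sum_l y_l^s\bigr)^{1/s-1}y_j^{\,s-1}$; for $s<0$ the $j$-dependent factor $y_j^{\,s-1}$ is largest at $y_j=m:=\min_l y_l$, and using the crude lower bound $\frac1k\sum_l y_l^s\ge\frac1k m^s$ (keep only the largest term) together with the fact that $x\mapsto x^{1/s-1}$ is decreasing, one obtains $\max_j\partial M_s/\partial y_j\le\frac1k\cdot k^{1-1/s}m^{1-s}\cdot m^{\,s-1}=k^{-1/s}$, uniformly on $\mathbb R_{>0}^k$. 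Combining the two ingredients, $\lvert M_s(\tilde{\boldsymbol d}_i^2)-M_s(\boldsymbol d_i^2)\rvert\le k^{-1/s}\cdot\mathcal O(k\varepsilon)=\mathcal O(k^{1-1/s}\varepsilon)$ for every $i$.

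Averaging over $i=1,\dots,n$ then yields $\lvert\tilde{\mathcal L}_s(\mathbf W,\rho_n)-\mathcal L_s(\mathbf W,\rho_n)\rvert\le\mathcal O(k^{1-1/s}\varepsilon)$, and the whole argument is conditioned only on the event of Lemma~\ref{lemma:relative error of RFF}, which holds with probability at least $1-\delta$ as soon as $D=\Omega\!\bigl(\frac d{\varepsilon^2}\log\frac1{\varepsilon\sqrt\delta}\bigr)$. The hard part will be the power-mean Lipschitz step: because $s<0$ all the exponents are negative, so one must track the direction of each inequality carefully, and it is crucial to pair the $\ell_\infty$ bound on $\nabla M_s$ with the $\ell_1$ bound on the distance perturbation — it is precisely the resulting factor $k\cdot k^{-1/s}=k^{1-1/s}$ that makes the term $A_2$ in Theorem~\ref{theorem: theorem1} match $A_1,A_3,A_4$ after one sets $\varepsilon=\Theta(\sqrt{k/n})$. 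A secondary technical point is justifying smoothness of $M_s$ on the segment, i.e. that the squared distances stay bounded away from $0$ so that $\nabla M_s$ is finite there.
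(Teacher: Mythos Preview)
Your proposal is correct and follows essentially the same route as the paper: bound each coordinate perturbation $|\tilde d_{ij}^2-d_{ij}^2|\le\mathcal O(\varepsilon)$ via Lemma~\ref{lemma:relative error of RFF} together with the simplex constraint on the columns of $\mathbf W$, then apply the $\ell_1$-Lipschitz estimate $|M_s(\boldsymbol a)-M_s(\boldsymbol b)|\le k^{-1/s}\|\boldsymbol a-\boldsymbol b\|_1$ and average. The only cosmetic differences are that the paper cites this Lipschitz constant from \cite{beliakov2010lipschitz} rather than deriving it by bounding $\|\nabla M_s\|_\infty$, and it routes the distance bound through the multiplicative estimate $\tilde d_{ij}^2\in(1\pm\varepsilon)d_{ij}^2$ before invoking convexity and $\|\phi(\cdot)\|=1$, whereas you expand the quadratic form in kernel entries directly.
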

\begin{proof}
According to lemma \ref{lemma:relative error of RFF}, we have
\begin{equation}\label{lemma4_first}
    \begin{aligned}
    \left\|\tilde{\phi}(\boldsymbol{x}_{i_0}) - \tilde{\mathbf{\Phi}}\mathbf{W}^{(j)}\right\|^2 
&= <\tilde{\phi}(\boldsymbol{x}_{i_0}),\tilde{\phi}(\boldsymbol{x}_{i_0})>
+\frac{\sum_{i=1}^{n}\sum_{i'=1}^{n}w_{ij}w_{i'j}<\tilde{\phi}(\boldsymbol{x}_{i}),\tilde{\phi}(\boldsymbol{x}_{i'})>}{(\sum_{i=1}^{n}w_{ij})^2}\\&
-2\frac{\sum_{i'=1}^{n}w_{i'j}<\tilde{\phi}(\boldsymbol{x}_{i}),\tilde{\phi}(\boldsymbol{x}_{i_0})>}{\sum_{i=1}^{n}w_{ij}}\\&
\leq (1+\varepsilon)k(\boldsymbol{x}_{i_0},\boldsymbol{x}_{i_0}) 
+\frac{(1+\varepsilon)\sum_{i=1}^{n}\sum_{i'=1}^{n}w_{ij}w_{i'j}k(\boldsymbol{x}_i,\boldsymbol{x}_{i'})}{(\sum_{i=1}^{n}w_{ij})^2}\\&
-2\frac{(1+\varepsilon)\sum_{i'=1}^{n}w_{i'j}k(\boldsymbol{x}_i,\boldsymbol{x}_{i_0})}{\sum_{i=1}^{n}w_{ij}}\\&
=(1+\varepsilon)\left\|\phi(\boldsymbol{x}_{i_0}) - \mathbf{\Phi}\mathbf{W}^{(j)}\right\|^2.
\end{aligned}
\end{equation}
Similarly, we have
\begin{equation}
     \left\|\tilde{\phi}(\boldsymbol{x}_{i_0}) - \tilde{\boldsymbol{\theta}}_j\right\|^2 \geq (1-\varepsilon)\left\|\phi(\boldsymbol{x}_{i_0}) - \boldsymbol\theta_j\right\|^2.
\end{equation}
Hence, we arrive at the following conclusion:
\begin{align}
        \left|\tilde{\mathcal{L}}_s(\mathbf{W},\rho_n)-\mathcal{L}_s(\mathbf{W},\rho_n)\right|&=\frac{1}{n}\left|\tilde{f_s}(\mathbf{W})-f_s(\mathbf{W})\right|\nonumber\\
        &=\frac{1}{n}\left|\sum_{i=1}^n \left(M_s(\tilde\phi(\boldsymbol{x}_i),\mathbf{W}) - M_s(\phi(\boldsymbol{x}_i),\mathbf{W})\right)\right|\nonumber
        \\&\leq \frac{1}{n}\sum_{i=1}^n \left|M_s(\tilde\phi(\boldsymbol{x}_i),\mathbf{W}) - M_s(\phi(\boldsymbol{x}_i),\mathbf{W})\right|\nonumber
        \\&\leq \frac{k^{-1/s}}{n}\sum_{i=1}^n  \sum_{j=1}^k \left|\|\tilde\phi(\boldsymbol{x}_i)-\mathbf{\Phi}\mathbf{W}^{(j)}\|- \|\phi(\boldsymbol{x}_i)-\mathbf{\Phi}\mathbf{W}^{(j)}\|\right|\label{eq:lemma4_1}
        \\&\leq \frac{k^{-1/s}}{n}\sum_{i=1}^n  \sum_{j=1}^k \mathcal{O}(\varepsilon)\left\|\phi(\boldsymbol{x}_i)-\mathbf{\Phi}\mathbf{W}^{(j)}\right\|\label{eq:lemma4_2}
        \\&\leq \frac{k^{-1/s}}{n} \mathcal{O}(\varepsilon) \sum_{i=1}^n  \sum_{j=1}^k  \sum_{i_0=1}^{n}w_{i_0 j} \left\|\phi(\boldsymbol{x}_{i})-\phi(\boldsymbol{x}_{i_0})\right\|\label{eq:lemma4_3}
        \\&\leq \frac{k^{-1/s}}{n} \mathcal{O}(\varepsilon)
        \sum_{i=1}^n  \sum_{j=1}^k  \sum_{i_0=1}^{n}w_{i_0 j}\nonumber
        (\|\phi(\boldsymbol{x}_i)\|+\|\phi(\boldsymbol{x}_{i_0})\|)
        \\&\leq \frac{k^{-1/s}}{n}\mathcal{O}(\varepsilon\cdot n\cdot k)\label{eq:lemma4_4}
        \\& =\mathcal{O}( k^{1-1/s}\varepsilon)\nonumber,
\end{align}
where Eq. (\ref{eq:lemma4_1}) is obtained from $\|M_s(\boldsymbol{x}) - M_s(\boldsymbol{y})\| \leq k^{-1/s} \|\boldsymbol{x}-\boldsymbol{y}\|_1$ \cite{beliakov2010lipschitz}; Eq. (\ref{eq:lemma4_2}) is obtained from Eq. (\ref{lemma4_first}); Eq. (\ref{eq:lemma4_3}) is obtained from $\sum_{i_0=1}^{n}w_{i_0 j}=1$ and triangle inequality; Eq. (\ref{eq:lemma4_4}) is obtained from $\sum_{i_0=1}^{n}w_{i_0 j}=1$ and $\|\phi(\boldsymbol{x})\|=1$.
\end{proof}
Now we can bound $A_2$ by the following lemma.
\begin{lemma}\label{lemma:A2}
    For RFF map $\tilde\phi:\mathbb{R}^{d}\to\mathbb{R}^{D}$ with $D=\Omega\left(\frac{d}{\varepsilon^2} \log \frac{1}{\varepsilon \sqrt{\delta}}\right)$, with probability at least $1-\delta$, we have
    \begin{equation}
    A_2=\left|\mathcal{L}_s(\tilde{\mathbf{W}}_{n,s},\rho_n)-\mathcal{L}_s(\mathbf{W}_{n,s},\rho_n)\right|\leq\mathcal{O}(k^{1-1/s}\varepsilon)
    \end{equation}
\end{lemma}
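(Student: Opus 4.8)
\textbf{Proof plan for Lemma \ref{lemma:A2}.}
The plan is to exploit the optimality of $\mathbf{W}_{n,s}$ for $\mathcal{L}_s(\cdot,\rho_n)$ and of $\tilde{\mathbf{W}}_{n,s}$ for $\tilde{\mathcal{L}}_s(\cdot,\rho_n)$, together with the uniform additive approximation bound $|\tilde{\mathcal{L}}_s(\mathbf{W},\rho_n)-\mathcal{L}_s(\mathbf{W},\rho_n)|\leq\mathcal{O}(k^{1-1/s}\varepsilon)$ just established in the previous lemma. This is the standard ``sandwich'' argument comparing the values of two nearby objectives at their respective minimizers.

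First I would observe that since $\tilde{\mathbf{W}}_{n,s}$ minimizes $\tilde{\mathcal{L}}_s(\cdot,\rho_n)$ over $\mathcal{A}$ and $\mathbf{W}_{n,s}\in\mathcal{A}$, we have $\tilde{\mathcal{L}}_s(\tilde{\mathbf{W}}_{n,s},\rho_n)\leq\tilde{\mathcal{L}}_s(\mathbf{W}_{n,s},\rho_n)$. Similarly, since $\mathbf{W}_{n,s}$ minimizes $\mathcal{L}_s(\cdot,\rho_n)$, we have $\mathcal{L}_s(\mathbf{W}_{n,s},\rho_n)\leq\mathcal{L}_s(\tilde{\mathbf{W}}_{n,s},\rho_n)$, so the quantity $\mathcal{L}_s(\tilde{\mathbf{W}}_{n,s},\rho_n)-\mathcal{L}_s(\mathbf{W}_{n,s},\rho_n)$ is nonnegative and equals its absolute value. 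To bound it from above, I would chain the inequalities:
\begin{align*}
\mathcal{L}_s(\tilde{\mathbf{W}}_{n,s},\rho_n)-\mathcal{L}_s(\mathbf{W}_{n,s},\rho_n)
&\leq \bigl(\tilde{\mathcal{L}}_s(\tilde{\mathbf{W}}_{n,s},\rho_n)+\mathcal{O}(k^{1-1/s}\varepsilon)\bigr)-\mathcal{L}_s(\mathbf{W}_{n,s},\rho_n)\\
&\leq \bigl(\tilde{\mathcal{L}}_s(\mathbf{W}_{n,s},\rho_n)+\mathcal{O}(k^{1-1/s}\varepsilon)\bigr)-\mathcal{L}_s(\mathbf{W}_{n,s},\rho_n)\\
&\leq \mathcal{O}(k^{1-1/s}\varepsilon)+\mathcal{O}(k^{1-1/s}\varepsilon)=\mathcal{O}(k^{1-1/s}\varepsilon),
\end{align*}
where the first and third steps apply the uniform additive bound (at $\tilde{\mathbf{W}}_{n,s}$ and at $\mathbf{W}_{n,s}$ respectively), and the middle step uses the minimality of $\tilde{\mathbf{W}}_{n,s}$ for $\tilde{\mathcal{L}}_s$. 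The probability $1-\delta$ and the requirement $D=\Omega(\frac{d}{\varepsilon^2}\log\frac{1}{\varepsilon\sqrt{\delta}})$ are inherited verbatim from the preceding lemma, since the only randomness and dimension requirement enter through that uniform bound, which must hold simultaneously at both minimizers — and it does, being a supremum over all $\mathbf{W}\in\mathcal{A}$.

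There is essentially no hard step here; the argument is a routine two-sided comparison. The one point to state carefully is that the additive bound from the previous lemma is \emph{uniform} over $\mathcal{A}$, so a single high-probability event covers its application at both $\tilde{\mathbf{W}}_{n,s}$ and $\mathbf{W}_{n,s}$ without a union bound or any loss in $\delta$. With that noted, combining this lemma with the earlier bounds $A_1,A_3,A_4\leq\mathcal{O}(\sqrt{k^{3-2/s}/n})$ and choosing $\varepsilon$ so that $k^{1-1/s}\varepsilon\lesssim\sqrt{k^{3-2/s}/n}$ — i.e. $\varepsilon\asymp\sqrt{k/n}$, which forces $D=\Omega(\frac{nd}{k}\log\frac{n}{k\delta})$ — yields Theorem \ref{theorem: theorem1}.
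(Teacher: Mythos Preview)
Your proposal is correct and is essentially the same argument as the paper's: both drop the absolute value using optimality of $\mathbf{W}_{n,s}$, then insert $\tilde{\mathcal{L}}_s(\tilde{\mathbf{W}}_{n,s},\rho_n)$ and use optimality of $\tilde{\mathbf{W}}_{n,s}$ for $\tilde{\mathcal{L}}_s$ to replace it by $\tilde{\mathcal{L}}_s(\mathbf{W}_{n,s},\rho_n)$, applying the preceding lemma's additive bound twice. Your explicit remark that the previous lemma's bound is uniform over $\mathcal{A}$ (so one high-probability event suffices) and your closing derivation of the $\varepsilon\asymp\sqrt{k/n}$ choice for Theorem~\ref{theorem: theorem1} mirror the paper's Remark following the lemma.
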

\begin{proof}
        \begin{equation}
    \begin{aligned}
A_2&=\left|\mathcal{L}_s(\tilde{\mathbf{W}}_{n,s},\rho_n)-\mathcal{L}_s(\mathbf{W}_{n,s},\rho_n)\right|
    =\mathcal{L}_s(\tilde{\mathbf{W}}_{n,s},\rho_n)-\mathcal{L}_s(\mathbf{W}_{n,s},\rho_n)
\\&=\left(\mathcal{L}_s(\tilde{\mathbf{W}}_{n,s},\rho_n)-\tilde{\mathcal{L}}_s(\tilde{\mathbf{W}}_{n,s},\rho_n)\right)+\left(\tilde{\mathcal{L}}_s(\tilde{\mathbf{W}}_{n,s},\rho_n)-\mathcal{L}_s(\mathbf{W}_{n,s},\rho_n)\right)
     \\&\leq \left(\mathcal{L}_s(\tilde{\mathbf{W}}_{n,s},\rho_n)-\tilde{\mathcal{L}}_s(\tilde{\mathbf{W}}_{n,s},\rho_n)\right)+\left(\tilde{\mathcal{L}}_s(\mathbf{W}_{n,s},\rho_n)-\mathcal{L}_s(\mathbf{W}_{n,s},\rho_n)\right)
     \\&\leq \mathcal{O}( k^{1-1/s}\varepsilon).
    \end{aligned}
\end{equation}
\end{proof}

\paragraph{Remark: }
If we want ${A}_2$ arrive at $\mathcal{O}\left(\sqrt{{k^{3-2/s}}/{n}}\right)$, we need to set $\varepsilon = \sqrt{{k}/{n}}$, and then the corresponding dimensionality of RFF is $\Omega(\frac{nd\log(n/k\delta)}{k})$. The proof of Theorem \ref{theorem: theorem1} can be completed by combining the upper bounds of $A_1, A_2, A_3, A_4$.

\subsection{A.2 Proof of Theorem 2}
Toward proving strong consistency, we introduce the following Uniform Strong Law of Large Numbers (USLLN), which plays a pivotal role in the proof of our main theorem.
\begin{lemma}[USLLN]\label{lemma:USLLN}\cite{paul2022implicit}
    For $s_0\leq-1$,
    \begin{equation}
        \sup_{s \leq s_0, \mathbf{W} \in \mathcal{A}} \left| \mathcal{L}_s(\mathbf{W},\rho_n) - \mathcal{L}_s(\mathbf{W},\rho) \right| \to 0
    \end{equation}
    almost surely under $\rho$.
\end{lemma}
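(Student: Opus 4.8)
The plan is to read Lemma~\ref{lemma:USLLN} as a Glivenko--Cantelli (uniform SLLN) statement, in the spirit of Pollard-type consistency proofs. Writing $g_{s,\boldsymbol\Theta}(x):=M_s(\|\phi(x)-\boldsymbol\theta_1\|^2,\dots,\|\phi(x)-\boldsymbol\theta_k\|^2)$, the reformulation of KPKM identifies each admissible $\mathbf W$ with a centroid tuple $\boldsymbol\Theta=\mathbf\Phi\mathbf W$ living in the compact set $\mathcal K:=\mathrm{conv}(\phi(C))^k$ (Theorem~1 of \cite{paul2022implicit}), so that $\mathcal L_s(\mathbf W,\rho_n)=\frac1n\sum_i g_{s,\boldsymbol\Theta}(x_i)$ and $\mathcal L_s(\mathbf W,\rho)=\int g_{s,\boldsymbol\Theta}\,d\rho$. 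Hence the lemma reduces to the Glivenko--Cantelli property of the class $\mathcal F=\{g_{s,\boldsymbol\Theta}:s\le s_0,\ \boldsymbol\Theta\in\mathcal K\}$, which I would get from two ingredients: $\mathcal F$ is uniformly bounded, and $\mathcal F$ is totally bounded in $C(C)$. Given these, a finite sup-norm $\epsilon$-net $\{g_1,\dots,g_N\}$ of $\mathcal F$ combined with the scalar SLLN for each $g_j$ yields $\limsup_n\sup_{\mathcal F}\bigl|\frac1n\sum_i g(x_i)-\mathbb E g\bigr|\le 2\epsilon$ almost surely; sending $\epsilon\downarrow0$ along a sequence (and using separability of $\mathcal F$ so the supremum is measurable) completes the argument.

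For total boundedness I would compactify the index set and then apply Arzela--Ascoli. The centroid set $\mathcal K$ is compact, being a finite product of the closed convex hull of the compact set $\phi(C)$. For the power parameter, reparametrize $u=1/s$, so $s\le s_0$ becomes $u\in[1/s_0,0)$, and adjoin $u=0$ with the convention $M_\infty(\boldsymbol y):=\min_j y_j$; the power-mean sandwich $\min_j y_j\le M_s(\boldsymbol y)\le k^{-1/s}\min_j y_j$ for $s<0$, together with $k^{-u}\to1$, shows $M_{1/u}$ extends continuously to $u=0$ uniformly over bounded $\boldsymbol y$, so the parameters range over the compact set $[1/s_0,0]\times\mathcal K$. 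Uniform boundedness then follows from $\|\phi(\cdot)\|\equiv1$ and $\|\boldsymbol\theta_j\|\le1$ on $\mathcal K$ (all squared distances $\le4$) plus the sandwich: $0\le g_{s,\boldsymbol\Theta}\le 4k^{-1/s_0}\le 4k$. For equicontinuity I would establish joint continuity of $(u,\boldsymbol\Theta,x)\mapsto g_{1/u,\boldsymbol\Theta}(x)$ on the compact set $[1/s_0,0]\times\mathcal K\times C$ by combining (i) the $k^{-1/s}$-Lipschitzness of $M_s$ in $\|\cdot\|_1$ on its arguments \cite{beliakov2010lipschitz} (bounded by $k^{-1/s_0}$ for $s\le s_0$), (ii) the Lipschitzness of $\boldsymbol\theta\mapsto\|\phi(x)-\boldsymbol\theta\|^2$ on $\mathcal K$, and (iii) continuity in $u$ including at $u=0$, again from the sandwich; a continuous function on a compact set is uniformly continuous, which is the uniform equicontinuity of $\mathcal F$, so Arzela--Ascoli gives relative compactness, hence total boundedness, in $C(C)$.

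The step I expect to be the main obstacle is controlling the unbounded power regime $s\to-\infty$: standard parametric Glivenko--Cantelli theorems do not apply directly, and one cannot simply differentiate $M_s(\boldsymbol y)=(\frac1k\sum_j y_j^s)^{1/s}$ in $s$ near configurations where $\min_j y_j$ is $0$ or tiny, since $y_j^s\to\infty$ for $s<0$ even though $M_s$ itself remains at or near $0$. The fix is the reparametrization $u=1/s$ together with the power-mean sandwich, which pins down $M_{1/u}$ near the limiting index $u=0$ purely through $k^{-u}\to1$, turning the problematic limit into a continuous endpoint of a compact interval; once that is in place, uniform boundedness and Arzela--Ascoli finish routinely. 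A secondary point worth noting is the compactness of $\mathrm{conv}(\phi(C))$ in the (generally infinite-dimensional) RKHS, which underpins the compactness of $\mathcal K$ used throughout.
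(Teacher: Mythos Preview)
The paper does not actually prove Lemma~\ref{lemma:USLLN}; it is quoted verbatim from \cite{paul2022implicit} and used as a black box in the proof of Theorem~\ref{theorem:consistency}. So there is no in-paper proof to compare against. Your proposal is a correct and standard Pollard-style Glivenko--Cantelli argument, and it is essentially the route the cited source takes: reduce to a family indexed by a compact parameter set, show a uniform envelope and equicontinuity, apply Arzel\`a--Ascoli to get a finite sup-norm net, and finish with the scalar SLLN on the net. Your handling of the non-compact range $s\le s_0$ via the reparametrization $u=1/s$ and the sandwich $\min_j y_j\le M_s(\boldsymbol y)\le k^{-1/s}\min_j y_j$ is exactly the right device to adjoin the limit $s\to-\infty$ as a continuous endpoint.

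One caveat worth flagging explicitly in your write-up: the paper's definition of $\mathcal A$ imposes only the affine constraint $\sum_i W_{ij}=1$, with no nonnegativity, so $\boldsymbol\Theta=\mathbf\Phi\mathbf W$ a priori ranges over the affine span of $\{\phi(x_i)\}$, not $\mathrm{conv}(\phi(C))^k$. Your reduction to $\mathcal K=\mathrm{conv}(\phi(C))^k$ is justified only because the lemma (as stated and used in \cite{paul2022implicit}) is really about centroids in $\mathcal K$, and the paper's $\mathbf W$-parametrization is meant to encode exactly that; but taken literally the supremum over $\mathcal A$ would be over an unbounded family and the statement would fail. It would be cleaner to say this up front rather than silently pass through the identification.
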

\begin{theorem}
    Under Assumption 1, $\tilde{\mathbf{W}}_{n,s} \xrightarrow{\text{a.s.}} \mathbf{W}^*$ with any constant probability when $n\to \infty$, RFF dimensionality $D\to \infty$ and $s\to -\infty$. 
\end{theorem}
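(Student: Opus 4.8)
The plan is to reduce the parameter convergence $\tilde{\mathbf{W}}_{n,s}\to\mathbf{W}^*$ to convergence of objective values $\Psi(\tilde{\mathbf{W}}_{n,s},\rho)\to\Psi(\mathbf{W}^*,\rho)$ via Assumption~1, and to establish the latter by a sandwich argument that chains three approximation controls: the gap between the power-mean objective $\mathcal{L}_s$ and the $\min$-objective $\Psi$, the empirical-versus-population gap for $\mathcal{L}_s$, and the RFF-versus-exact gap between $\tilde{\mathcal{L}}_s$ and $\mathcal{L}_s$. The first ingredient is the elementary power-mean sandwich: for $s<0$ and any nonnegative $\boldsymbol{y}\in\mathbb{R}^k$, $\min_j y_j \le M_s(\boldsymbol{y}) \le k^{-1/s}\min_j y_j$, so integrating against $\rho$ gives $\Psi(\mathbf{W},\rho)\le\mathcal{L}_s(\mathbf{W},\rho)\le k^{-1/s}\Psi(\mathbf{W},\rho)$ for every $\mathbf{W}\in\mathcal{A}$, with $k^{-1/s}\to1$ as $s\to-\infty$.

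Next I would set $\Delta_n:=\sup_{s\le s_0,\,\mathbf{W}\in\mathcal{A}}|\mathcal{L}_s(\mathbf{W},\rho_n)-\mathcal{L}_s(\mathbf{W},\rho)|$ and $\delta_D:=\sup_{\mathbf{W}\in\mathcal{A}}|\tilde{\mathcal{L}}_s(\mathbf{W},\rho_n)-\mathcal{L}_s(\mathbf{W},\rho_n)|$, and chain inequalities: apply $\Psi\le\mathcal{L}_s$, then pass from $\rho$ to $\rho_n$ via $\Delta_n$, then from $\mathcal{L}_s$ to $\tilde{\mathcal{L}}_s$ via $\delta_D$, then use optimality of $\tilde{\mathbf{W}}_{n,s}$ for $\tilde{\mathcal{L}}_s(\cdot,\rho_n)$ together with $\mathbf{W}^*\in\mathcal{A}$, then $\delta_D$ again, then $\Delta_n$ again, and finally $\mathcal{L}_s\le k^{-1/s}\Psi$. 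This yields
\[
\Psi(\tilde{\mathbf{W}}_{n,s},\rho)\ \le\ k^{-1/s}\,\Psi(\mathbf{W}^*,\rho)\ +\ 2\delta_D\ +\ 2\Delta_n .
\]
Here it matters that both $\Delta_n$ (the USLLN of Lemma~\ref{lemma:USLLN}) and $\delta_D$ (which is $\mathcal{O}(k^{1-1/s}\varepsilon)$ by the RFF additive-error lemma, uniformly in $\mathbf{W}$) are suprema over $\mathcal{A}$, so every step remains valid at the data- and RFF-dependent minimizer $\tilde{\mathbf{W}}_{n,s}$.

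Then I would take limits. By Lemma~\ref{lemma:USLLN}, $\Delta_n\to0$ almost surely over the sample once $s\le s_0$; by Lemma~\ref{lemma:relative error of RFF} and its corollary, $\delta_D=\mathcal{O}(k^{1-1/s}\varepsilon)$ holds with probability at least $1-\delta$ whenever $D=\Omega(\varepsilon^{-2}d\log(\varepsilon^{-1}\delta^{-1/2}))$, so fixing $\delta$ and letting $\varepsilon\to0$ as $D\to\infty$ gives $\delta_D\to0$ on that event (the residual $k^{-1/s}$ stays bounded because $s\to-\infty$ is taken last). Consequently $\limsup\Psi(\tilde{\mathbf{W}}_{n,s},\rho)\le\Psi(\mathbf{W}^*,\rho)$, and since $\mathbf{W}^*$ minimizes $\Psi(\cdot,\rho)$ the reverse inequality is automatic, so $\Psi(\tilde{\mathbf{W}}_{n,s},\rho)\to\Psi(\mathbf{W}^*,\rho)$ on an event of probability $\ge1-\delta$. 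Assumption~1 then forces $\tilde{\mathbf{W}}_{n,s}\to\mathbf{W}^*$: if not, a subsequence stays outside some $B(\mathbf{W}^*,r)$, whence $\Psi(\tilde{\mathbf{W}}_{n,s},\rho)>\Psi(\mathbf{W}^*,\rho)+\varepsilon$ along it, contradicting the convergence; since $\delta>0$ is arbitrary, this holds ``with any constant probability.''

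The main obstacle is conceptual rather than computational: one must juggle two independent randomness sources — the i.i.d.\ sample, controlled almost surely by the USLLN, and the RFF frequency draws, controlled only in high probability by Lemma~\ref{lemma:relative error of RFF}, which is precisely why the conclusion is ``with any constant probability'' and not almost sure — while verifying that all three approximation bounds are uniform over $\mathcal{A}$ so they survive evaluation at the random minimizer $\tilde{\mathbf{W}}_{n,s}$, and while ordering the three limits so that $k^{-1/s}\to1$ and the stray $k^{1-1/s}$ factor multiplying $\varepsilon$ in $\delta_D$ both remain harmless.
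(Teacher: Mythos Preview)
Your proposal is correct and follows essentially the same strategy as the paper: reduce to $\Psi(\tilde{\mathbf{W}}_{n,s},\rho)\to\Psi(\mathbf{W}^*,\rho)$ via Assumption~1, then chain the power-mean/$\min$ gap, the empirical/population gap (USLLN), and the RFF/exact gap. The paper's decomposition routes through the intermediate exact-kernel minimizer $\mathbf{W}_{n,s}$ (using Lemma~\ref{lemma:A2}) and invokes the pointwise convergence $M_s\to\min$, whereas you bypass $\mathbf{W}_{n,s}$ by comparing $\tilde{\mathbf{W}}_{n,s}$ directly to $\mathbf{W}^*$ through the uniform bound $\delta_D$ and use the explicit sandwich $\Psi\le\mathcal{L}_s\le k^{-1/s}\Psi$; these are cosmetic variations of the same argument.
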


\begin{proof}
    We must show for arbitrarily small $r>0$, that the minimizer of KPK with RFF $\tilde{\mathbf{W}}_{n,s}$ eventually lies inside the ball.  Under assumption 1, it suffices to show that for all $\varepsilon>0$, there exists $N_1>0, N_2>0$ and $N_3<0$ such that $n>N_1, D>N_2$ and $s<N_3$ implies that $\Psi(\tilde{\mathbf{W}}_{n,s}, \rho) - \Psi(\mathbf{W}^*, \rho) \leq \varepsilon$ with any constant probability.
    We start by breaking down the target formula into four terms:
    \begin{equation}
        \begin{aligned}
            \Psi(\tilde{\mathbf{W}}_{n,s}, \rho) &- \Psi(\mathbf{W}^*, \rho)=E_1+E_2+E_3+E_4, \ \ \text{where}\\ &
            E_1 =  \Psi(\tilde{\mathbf{W}}_{n,s}, \rho)-\mathcal{L}_s(\tilde{\mathbf{W}}_{n,s}, \rho);\\ &
            E_2 =  \mathcal{L}_s(\tilde{\mathbf{W}}_{n,s}, \rho)-\mathcal{L}_s(\tilde{\mathbf{W}}_{n,s}, \rho_n);\\ &
            E_3 =  \mathcal{L}_s(\tilde{\mathbf{W}}_{n,s}, \rho_n)-\mathcal{L}_s({\mathbf{W}}_{n,s}, \rho_n);\\ &
            E_4 =  \mathcal{L}_s({\mathbf{W}}_{n,s}, \rho_n)-\Psi(\mathbf{W}^*, \rho).\\
        \end{aligned}
    \end{equation}
    First, because $M_s(\phi(\boldsymbol{x}),\mathbf{W})\to \min_{j\in[k]}\|\phi(\boldsymbol{x})-\mathbf{\Phi}\mathbf{W}\|^2 $ when $s\to-\infty$, we can choose $N_3<0$ such that if $s<N_3$, then
    \begin{equation}
    \begin{aligned}
            E_1&=\Psi(\tilde{\mathbf{W}}_{n,s}, \rho)-\mathcal{L}_s(\tilde{\mathbf{W}}_{n,s}, \rho)
            =\int \left( \min_{j\in[k]}\|\phi(\boldsymbol{x})-\mathbf{\Phi}\tilde{\mathbf{W}}^{(j)}_{n,s}\|^2-M_s(\phi(\boldsymbol{x}),\tilde{\mathbf{W}}_{n,s}) \right)d\rho
            \leq\frac{\varepsilon}{6}\int d\rho = \frac{\varepsilon}{6}.
    \end{aligned}
    \end{equation}
    Then, according to lemma \ref{lemma:USLLN}, we can choose $N_1$ large enough such that $n\geq N_1$ implies that $E_2\leq\varepsilon/6$. According to lemma \ref{lemma:A2}, when $D=\Omega(\frac{dk^{2-2/s}}{\varepsilon^2}\log\frac{k^{1-1/s}}{\varepsilon\sqrt{\delta}})$, with probability at least $1-\delta$ we have $E_3\leq \mathcal{O}(\varepsilon)$, which implies we can choose $N_2$ large enough such that $E_3\leq\varepsilon/3$ when $D\geq N_2$. To bound $E_4$, we observe that
        \begin{align}
            E_4 &= \mathcal{L}_s({\mathbf{W}}_{n,s}, \rho_n)-\Psi(\mathbf{W}^*, \rho)\nonumber\\&
            \leq \mathcal{L}_s({\mathbf{W}^*}, \rho_n)-\Psi(\mathbf{W}^*, \rho)\nonumber\\&
            \leq \mathcal{L}_s({\mathbf{W}^*}, \rho)+\varepsilon/6-\Psi(\mathbf{W}^*, \rho)\label{eq:theorem2_1}\\&
            \leq \varepsilon/6 + \varepsilon/6 = \varepsilon/3,\label{eq:theorem2_2}
        \end{align}
    where Eq. (\ref{eq:theorem2_1}) is obtained from lemma \ref{lemma:USLLN}. Eq. (\ref{eq:theorem2_2}) can be obtained by a similar analysis of $E_1$. In conclusion, we have
    \begin{equation}
    \begin{aligned}
             \Psi(\tilde{\mathbf{W}}_{n,s}, \rho) - \Psi(\mathbf{W}^*, \rho)&=E_1+E_2+E_3+E_4\\&  
             \leq \varepsilon/6 +\varepsilon/6 +\varepsilon/3 +\varepsilon/3 \\&= \varepsilon.
    \end{aligned}
    \end{equation}
\end{proof}

\subsection{A.3 Proof of Theorem 3}
To prove Theorem 3, we first introduce the following lemma, which discusses the relative error bound for approximating kernel $k$-means through RFF. Given a data set $P \subset \mathbb{R}^d$ and kernel function $k : \mathbb{R}^d \times \mathbb{R}^d \to \mathbb{R}$, denoting the feature mapping as $\varphi : \mathbb{R}^d \to \mathcal{H}$. The kernel $k$-clustering problem asks for a $k$-partition $\mathcal{C} = \{C_1, C_2, ..., C_k\}$ of $P$ that minimizes the cost function:
\begin{equation}
    \mathrm{cost}_p^\phi(P, \mathcal{C}) := \sum_{i=1}^{k} \min_{c_i \in \mathcal{H}} \sum_{x \in C_i} \| \phi(x) - c_i \|_2^2.
\end{equation}

\begin{lemma}\cite{cheng2023relative}\label{lemma:RFF for kmeans}
 For kernel $k$-clustering  problem whose kernel function $K:\mathbb{R}^d \times\mathbb{R}^d \to \mathbb{R}$ is shift-invariant and analytic at the origin, for every data set $P \subset \mathbb{R}^d$, the RFF map $\pi:\mathbb{R}^{d}\to\mathbb{R}^{D}$ with target dimension $D = \Omega(( \log^3 \frac{k}{\delta} + \log^3 \frac{1}{\varepsilon} + 2^8)/\varepsilon^2)$ satisfies
 \begin{equation}
     \begin{aligned}
            \Pr[\forall \text{k-partition } \mathcal{C} \text{ of } P : \operatorname{cost}^{\pi}(P,\mathcal{C}) &\in (1 \pm \varepsilon) \cdot {\operatorname{cost}^{\phi}(P,\mathcal{C})} ] \geq 1-\delta. 
     \end{aligned}
 \end{equation}
\end{lemma}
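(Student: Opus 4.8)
The plan is to reduce the simultaneous cost-preservation statement to a relative-error guarantee on the quadratic forms the kernel induces on cluster-indicator directions, and then to make that guarantee uniform over all partitions by a chaining argument rather than a naive union bound. First I would put the cost in closed form. For a fixed $k$-partition $\mathcal{C}=\{C_1,\dots,C_k\}$ the optimal center $c_i$ is the centroid, so with the orthonormal indicators $u_i=|C_i|^{-1/2}\mathbf{1}_{C_i}$ one has
$$\operatorname{cost}^{\phi}(P,\mathcal{C})=\operatorname{tr}(K)-\sum_{i=1}^k u_i^{\top}Ku_i,\qquad \operatorname{cost}^{\pi}(P,\mathcal{C})=\operatorname{tr}(\tilde K)-\sum_{i=1}^k u_i^{\top}\tilde K u_i,$$
where $K$ and $\tilde K$ are the exact and RFF Gram matrices. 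Since the RFF map of Eq.~(\ref{def:RFF}) satisfies $\|\pi(x)\|^2=1=k(x,x)$, the trace terms cancel and the error collapses to $\operatorname{cost}^{\pi}-\operatorname{cost}^{\phi}=-\sum_{i}u_i^{\top}(\tilde K-K)u_i$. Equivalently, writing the cost as a nonnegative combination of within-cluster squared distances $\tfrac{1}{2|C_i|}\sum_{x,y\in C_i}\|\phi(x)-\phi(y)\|^2$, the claim would follow if RFF preserved every relevant squared feature distance with $(1\pm\varepsilon)$ relative error, since relative error is inherited by nonnegative combinations. This reduction is the key structural move: it replaces ``for all partitions'' by a statement about a single random map acting on the structured family of directions $\{u_i\}$.

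Next I would establish the per-direction (equivalently, per-pair) concentration. Writing $\langle\pi(x),\pi(y)\rangle=\tfrac1D\sum_{r=1}^{D}\cos(\omega_r^{\top}(x-y))$ as an average of i.i.d.\ bounded terms with mean $k(x,y)$, Bernstein-type bounds give concentration of each Gram entry, and hence of $u^{\top}\tilde K u$, around its mean. The delicate point, already visible in Lemma~\ref{lemma:relative error of RFF}, is that a purely additive RFF bound is too weak to yield \emph{relative} error when feature distances are small; I would instead use the analyticity of $K$ at the origin through the expansion $1-\cos(\omega^{\top}z)\approx\tfrac12(\omega^{\top}z)^2$, which forces the RFF fluctuations to scale with the true squared distance and thereby makes the error genuinely multiplicative. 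Tracking the higher moments needed for this refined concentration is what produces the cubic-logarithm factors $\log^3(\cdot)$ and the $\varepsilon^{-2}$ scaling in the target dimension.

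The crux of the argument, and the step I expect to be the main obstacle, is upgrading the per-configuration bound to a guarantee holding \emph{simultaneously} for all $k$-partitions while keeping $D$ free of $n$ and $d$. A naive union bound over the (exponentially many, or after discretization still $\sim 2^n$) cluster-indicator directions would inject a $\log n$ factor, and covering the data domain in $\mathbb{R}^d$ would inject a $d$ factor; neither is present in the stated bound. The plan is therefore to run a generic-chaining / Dudley-entropy-integral argument over the whitened functionals $u\mapsto u^{\top}(\tilde K-K)u$, exploiting that (i) the super-polynomial eigenvalue decay of analytic kernels makes the effective dimension of the relevant function class depend only polylogarithmically on the number of centers $k$, and (ii) the cost is determined by the projection onto the rank-$k$ centroid subspace, so the complexity entering the entropy integral is governed by $k$ rather than $n$. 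Assembling the chaining tail with the multiplicative per-pair concentration, setting the total failure probability to $\delta$, and back-solving for $D$ then yields $D=\Omega\big((\log^3(k/\delta)+\log^3(1/\varepsilon)+2^8)/\varepsilon^2\big)$ and the $(1\pm\varepsilon)$ relative bound for all partitions at once. The hardest technical point is controlling the entropy integral uniformly in relative (not absolute) scale, since the small-distance regime is exactly where both the chaining increments and the required concentration are most fragile; handling it cleanly is precisely what the analyticity assumption is there to enable.
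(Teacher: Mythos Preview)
The paper does not prove this lemma: it is imported verbatim from \cite{cheng2023relative} and used as a black box in the proof of Theorem~\ref{theorem:approx consistency}. There is therefore no in-paper argument to compare your proposal against.

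For what it is worth, your sketch is a reasonable high-level outline of how such a result is obtained (cost-as-quadratic-form reduction, multiplicative per-pair concentration exploiting analyticity of $K$ at the origin, and a chaining step to avoid dependence on $n$ and $d$), and it correctly identifies the hard part as making the uniformity argument go through in \emph{relative} scale. But since the present paper only cites the lemma, the appropriate comparison would be with the original source, not with anything here.
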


\begin{proof}
    First we note that when $n,s\to \infty$ we have 
    \begin{equation}
        \tilde\Psi(\tilde{\mathbf{W}}_{n,s}, \rho) \to \tilde\Psi(\tilde{\mathbf{W}}^*, \rho).
    \end{equation}
    This can be proved by breaking down the target formula into three terms:
    \begin{equation}
    \begin{aligned}
            \tilde\Psi(\tilde{\mathbf{W}}_{n,s}, \rho) &-\tilde\Psi(\tilde{\mathbf{W}}^*, \rho)=E_1+E_2+E_3, \ \ \text{where}\\&
            E_1 =  \tilde\Psi(\tilde{\mathbf{W}}_{n,s}, \rho)-\tilde{\mathcal{L}}_s(\tilde{\mathbf{W}}_{n,s}, \rho);\\&
            E_2 =  \tilde{\mathcal{L}}_s(\tilde{\mathbf{W}}_{n,s}, \rho)-\tilde{\mathcal{L}}_s(\tilde{\mathbf{W}}_{n,s}, \rho_n);\\&
            E_3 =  \tilde{\mathcal{L}}_s({\mathbf{W}}_{n,s}, \rho_n)-\tilde{\Psi}(\tilde{\mathbf{W}}^*, \rho).\\
    \end{aligned}
    \end{equation}

    Then the conclusion can be obtained by a similar analysis of Theorem 2. According to the assumption, we now have $\lim_{n,s\to\infty}\tilde{\mathbf{W}}_{n,s}=\tilde{\mathbf{W}}^{*}$. Because $\mathbf{W}^{*}$ and $\tilde{\mathbf{W}}^{*}$ minimize the $k$-means problem, they can be seen as a partition of data.
    According to Lemma \ref{lemma:RFF for kmeans}, with probability at least $1-\delta$ we have
    \begin{equation}
        \begin{aligned}
            \lim_{n,s\to\infty}(1-\varepsilon)\Psi(\tilde{\mathbf{W}}_{n,s},\rho)&=
            (1-\varepsilon)\Psi(\tilde{\mathbf{W}}^{*},\rho)\\
            &\leq \tilde\Psi(\tilde{\mathbf{W}}^{*},\rho)\\
            &\leq \tilde\Psi({\mathbf{W}}^{*},\rho)\\
            &\leq (1+\varepsilon)\Psi({\mathbf{W}}^{*},\rho).
        \end{aligned}
    \end{equation}
\end{proof}

\newpage
\section{B. Supplementary Experiments}
In this section, we first introduce the parameter settings and datasets. Then we supplement the experimental results for the proposed single-view clustering method RFF-KPKM. Subsequently, we supplement the parameter sensitivity and convergence results of IP-RFF-MKPKM on all datasets.

\subsection{B.1 Parameter Settings}
\paragraph{Parameter Settings for Kernel and RFF}
For all kernel clustering methods we referenced, we configured their parameters within their recommended ranges. We all choose Gaussian kernel  $k(\boldsymbol{x}_{i},\boldsymbol{x}_{j})=\exp(-\|\boldsymbol{x}_i-\boldsymbol{x}_j\|^2/2\sigma^2)$. The bandwidth parameter $\sigma$ for the Gaussian kernel is chosen as $10^{3}$.
Correspondingly, to generate RFF, the frequency vectors $\boldsymbol{\omega}_{i}$ should be drawn i.i.d from Gaussian distribution $\mathcal{N}(\boldsymbol{0},\sigma^{-2}\mathbf{I}_{d})$. According to Theorem \ref{theorem:approx consistency}, we could set the RFF dimension $D$ to  $\Omega(( \log^3 \frac{k}{\delta} +  \log^3 \frac{1}{\varepsilon} + 2^8)/\varepsilon^2)$ to ensure the ($1+\varepsilon$) approximation with probability at least $1-\delta$. In the experiments, we set $\varepsilon$ and $\delta$ to $0.5$, and set $D$ to $\lceil(4 \log^3 \frac{k}{\delta})/\varepsilon^2\rceil$ to ensure the approximation property.

\paragraph{Parameter Settings for the Proposed Methods}
For proposed IP-RFF-MKPKM, we set $\eta_{j,l}=\sum_{i=1}^{n}\|\tilde{\phi}_{l}(\boldsymbol{x}_{i})-\boldsymbol{\theta}_{j,l}\|^{2}/n$ according to the suggestion in \cite{krishnapuram1993possibilistic}. For the entropy regularization parameter $\lambda$, we search for the best one in the set $\{10^{-1},10^{0},\dots,10^{3}\}$. We also search for the optimal initial value $s_0$ of $s$ among the set $\{5, 10,\dots, 25\}$.  To initialize the cluster centroids $\mathbf{\Theta}$, we obtain them by performing $k$-means on the mapped data. Following the approach in \cite{paul2022implicit}, we fix the annealing parameter $\gamma=1.04$ and set the fuzziness parameter $m=2$.  The tuning and choice of $\lambda$, $s_0$, $\gamma$, $m$ are the same for both IP-RFF-MKPKM and RFF-KPKM. In IP-RFF-MKPKM experiments, we update $s=\gamma\cdot s$ every two iterations. In RFF-KPKM experiments, we update $s=\gamma\cdot s$ every three iterations.

\subsection{B.2 Datasets} To evaluate the effectiveness of the proposed RFF-KPKM, we use six single-view datasets derived from the first view of six multi-view datasets: Yale \cite{cai2005using}, Caltech101-20 \cite{li2015large}, 100leaves\cite{wang2019gmc}, CIFAR10 \cite{krizhevsky2009learning}, YTF10 and YTF20 \cite{wolf2011face}. 
\begin{table}[h]
\centering
\begin{tabular}{cccc}
\toprule
Dataset      & Size  &Features  &Classes   \\ \midrule
Yale 
& 165 & 4096 & 15    
\\
100leaves  
& 1600 & 64 & 100    
\\
Caltech101-20  
& 2386 & 48& 20   
\\
YTF10 
& 38654 & 944 & 10   
\\
CIFAR10 
& 50000 & 512 & 10    
\\
YTF20
& 63896 & 944 & 20     
\\
\bottomrule
\end{tabular}
\caption{Datasets used in single-view experiments.}\label{tab:single view datasets}
\end{table}

To evaluate the effectiveness of the proposed IP-RFF-MKPKM, we use seven widely used multi-view datasets: LGG \cite{cancer2015comprehensive}, Caltech101-7 \cite{li2015large}, HW2 \cite{Kevin2013}, NUS-WIDE-SCENE \cite{chua2009nus}, NUS-WIDE-OBJECT \cite{chua2009nus} and CIFAR10 \cite{krizhevsky2009learning}.
\begin{table}[ht]
\centering
\begin{tabular}{cccc}
\toprule
Dataset      & Size  &Features  &Classes   \\ \midrule
LGG  
& 267 & 2000/$\dots$/209& 3  
\\
Caltech101-7  
& 1474 & 48/$\dots$/928& 7   
\\
HW2 
& 2000 & 240/216 & 10    
\\
Caltech101-20 
& 2386 & 48/$\dots$/928 & 20     
\\
NUS-WIDE-SCENE
& 4095 & 128/$\dots$/64 & 33     
\\
NUS-WIDE-OBJECT
& 23953 & 129/$\dots$/65 & 31
\\
CIFAR10
& 60000 & 944/$\dots$/640 & 10
\\
\bottomrule
\end{tabular}
\caption{Datasets used in multi-view experiments.}\label{tab:multi view datasets}
\end{table}

\newpage
\subsection{B.3 Experimental Results for RFF-KPKM}
Table \ref{tab:result comparison1} displays the clustering outcomes of RFF-KPKM on the six benchmark datasets. From the results, we draw the following conclusions:
\begin{enumerate}
    \item Our proposed KPKM algorithm generally outperforms existing clustering methods on most datasets. In terms of ACC, KPKM surpasses the second-best algorithm on the Yale, Caltech101-20, 100leaves, CIFAR10, YTF10, YTF20 datasets by margins of 3.03\%, 2.44\%, 2.07\%, 0.38\%, 10.81\%, 4.12\%, respectively. 
    \item When compared to the original KPKM method, our approach generally achieves superior performance. On datasets Yale, Caltech101-20, and 100leaves, IP-RFF-MKPKM consistently outperforms KPKM by 11.51\%, 2.60\%, and 2.32\% in terms of ACC. This result demonstrates that $\mathrm{poly}(\varepsilon^{-1}\log k)$ RFF can preserve the accuracy of KPKM, while the combination with the PKM method can improve the clustering result.
    \item RFF-KPKM can run on the large-scale datasets CIFAR10, YTFace, YTF20, containing 63896 instances at most, while KPKM can not due to the memory limitation. The result shows the strong scalability of our method.
\end{enumerate}
\begin{table*}[ht]
\centering
\scalebox{0.95}{\begin{tabular}{ccccccc}
\hline
Dataset    &Metric      & Kernel $k$-means        & Spectral Clustering        & Power $k$-means      & Kernel Power $k$-means & Proposed \\ 

\hline
                               & ACC          &58.18	&41.67	&57.58	&49.70 	&\textbf{61.21}
                               \\
                               & NMI         &61.49	&47.00 	&65.90 	&58.33	&\textbf{68.18}
                              \\
                            \multirow{-3}{*}{Yale}  & Purity      &25.39	&44.00 	&59.39	&52.73	&\textbf{61.21}
                        \\ \hline
                               & ACC           &23.72	&23.68	&25.31 	&25.15	&\textbf{27.75}
                                 \\
                               & NMI       &29.46	&30.05	&29.56	&27.88	&\textbf{29.71}
                                 \\
                        \multirow{-3}{*}{Caltech101-20}  & Purity &53.56	&53.02	&52.14	&52.10	&\textbf{53.98}
                                \\ \hline
                               & ACC          &60.19	&32.86	&62.06 	&61.81	&\textbf{64.13}
                       \\
                               & NMI          &81.45	&62.32	&82.75	&80.87	&\textbf{82.89}
                           \\
\multirow{-3}{*}{100leaves}      & Purity       &64.06	&36.81	&65.88	&65.00	&\textbf{66.88}
                        \\ \hline
                               & ACC          &87.36	&-	&88.06	&-	&\textbf{88.44}
                        \\
                               & NMI          &78.28	&-	&\textbf{79.02}	&-	&78.65
                        \\
                            \multirow{-3}{*}{CIFAR10}         & Purity        &87.36	&-	&88.06	&-	&\textbf{88.44}
                           \\ \hline
                               & ACC        &72.04	&-	&76.33	&-	&\textbf{87.14}
                        \\
                     & NMI       &76.95	&-	&80.71	&-	&\textbf{81.86}
                        \\
\multirow{-3}{*}{YTF10}     & Purity       &76.39	&-	&81.35	&-	&\textbf{87.14}
                           \\ \hline
                               & ACC         &69.01	&-	&64.47	&-	&\textbf{73.13}
                        \\
    & NMI          &77.41	&-	&72.96	&-	&\textbf{78.77}
                          \\
\multirow{-3}{*}{YTF20}   & Purity       &73.51	&-	&68.35	&-	&\textbf{78.34}
           \\ \hline
\end{tabular}}
    \caption{ Empirical evaluation and comparison of RFF-MKPKM with four baseline methods on six benchmark datasets in terms of
clustering accuracy (ACC), normalized mutual information (NMI), and Purity. The best value is marked in bold, and `-' indicates that the algorithm failed to run due to insufficient memory.}
    \label{tab:result comparison1}
\end{table*}

\subsection{B.4 Supplementary Convergence and Sensitivity Analysis for IP-RFF-MKPKM}
We give the complete convergence behavior of IP-RFF-MKPKM on six benchmark datasets as shown in Fig. \ref{Convergence analysis of IP-RFF-MKPKM on six benchmark datasets.}.
We also study the sensitivity of the parameters $s_0$ and $\lambda$ as shown in Fig. \ref{Sensitivity analysis}.

\begin{figure}[htbp]
	\centering
	\begin{subfigure}{0.3\linewidth}
		\centering
		\includegraphics[width=0.8\linewidth]{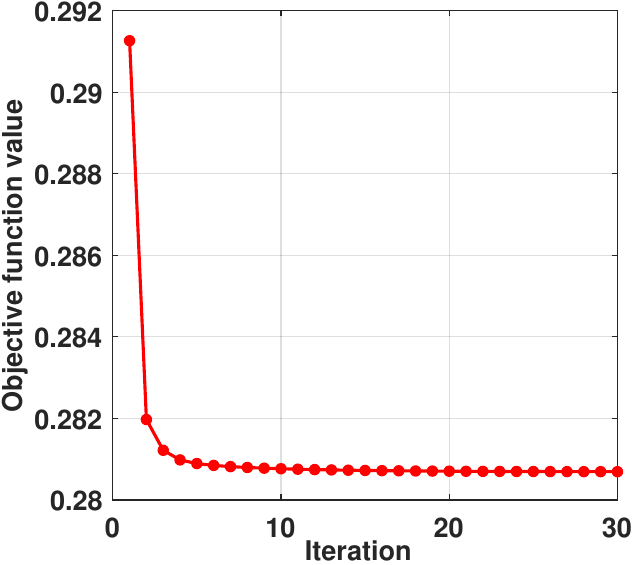}
		\caption{LGG}
	\end{subfigure}
	\centering
	\begin{subfigure}{0.3\linewidth}
		\centering
		\includegraphics[width=0.8\linewidth]{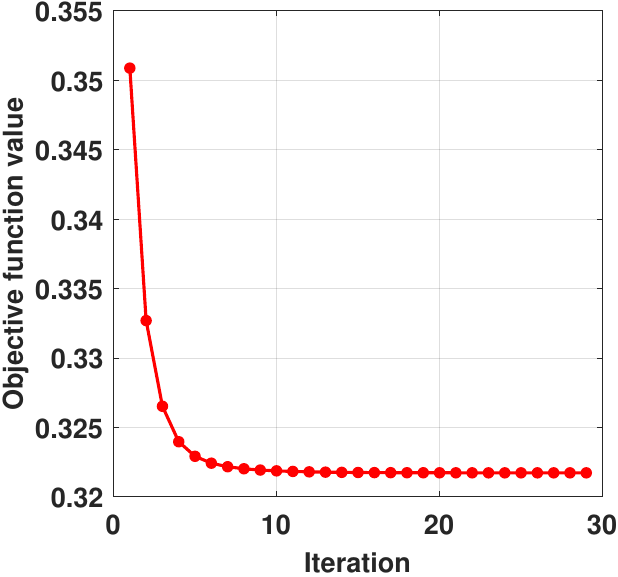}
		\caption{HW2}
	\end{subfigure}
        \centering
	\begin{subfigure}{0.3\linewidth}
		\centering
		\includegraphics[width=0.8\linewidth]{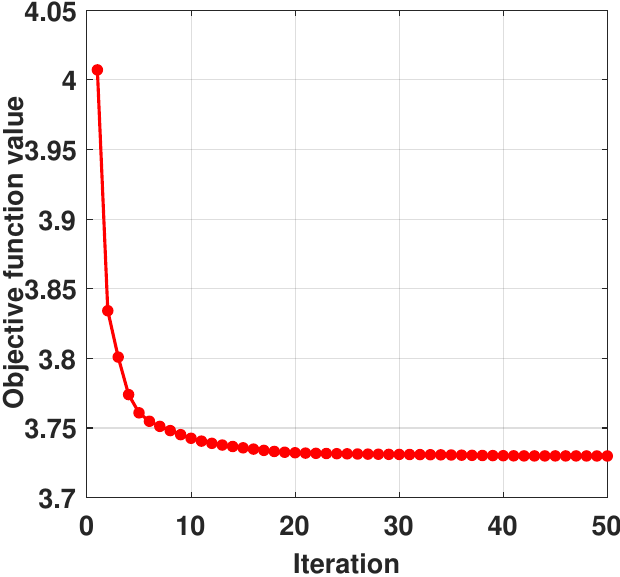}
		\caption{Caltech101-20}
	\end{subfigure}
    	\centering
	\begin{subfigure}{0.3\linewidth}
		\centering
		\includegraphics[width=0.8\linewidth]{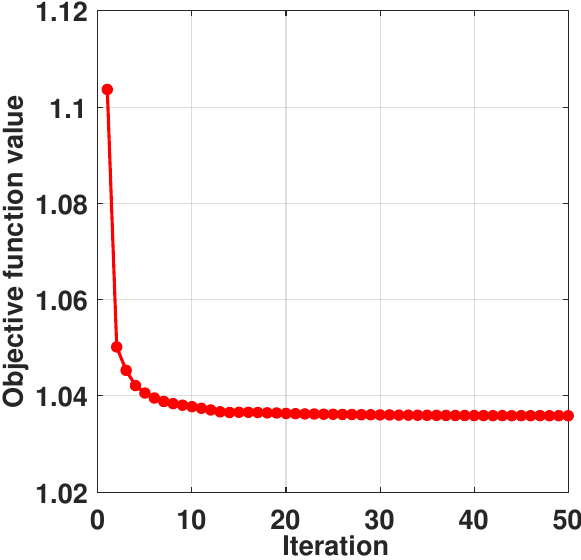}
		\caption{NUS-WIDE-SCENE}
	\end{subfigure}
	\centering
	\begin{subfigure}{0.3\linewidth}
		\centering
		\includegraphics[width=0.8\linewidth]{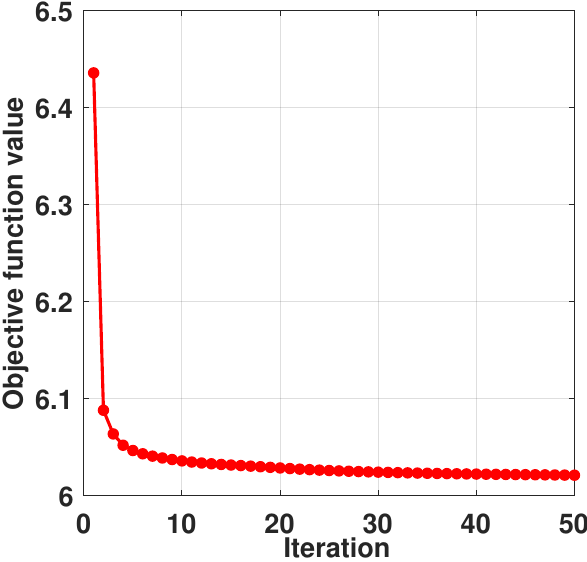}
		\caption{NUS-WIDE-OBJECT}
	\end{subfigure}
        \centering
	\begin{subfigure}{0.3\linewidth}
		\centering
		\includegraphics[width=0.8\linewidth]{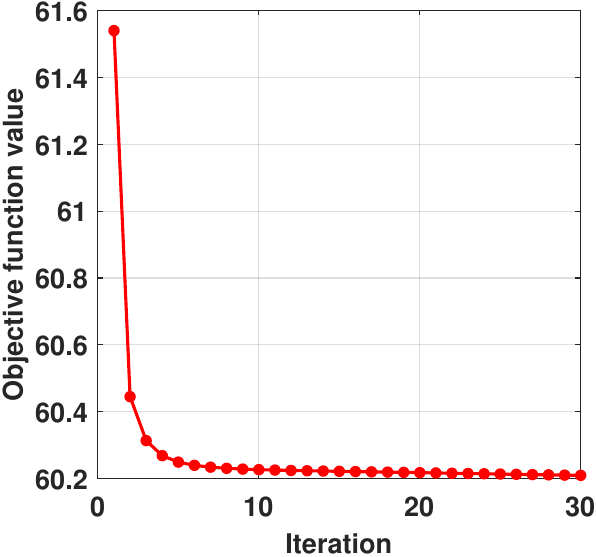}
		\caption{Cifar10}
	\end{subfigure}
	\caption{Convergence analysis of IP-RFF-MKPKM on six benchmark datasets.}
        \label{Convergence analysis of IP-RFF-MKPKM on six benchmark datasets.}
\end{figure}

\begin{figure}[htbp]
	\centering
	\begin{subfigure}{0.3\linewidth}
		\centering
		\includegraphics[width=0.9\linewidth]{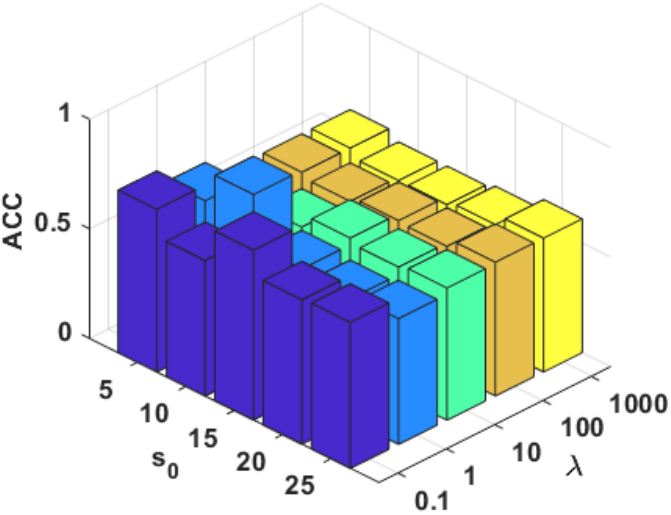}
		\caption{LGG}
	\end{subfigure}
	\centering
	\begin{subfigure}{0.3\linewidth}
		\centering
		\includegraphics[width=0.9\linewidth]{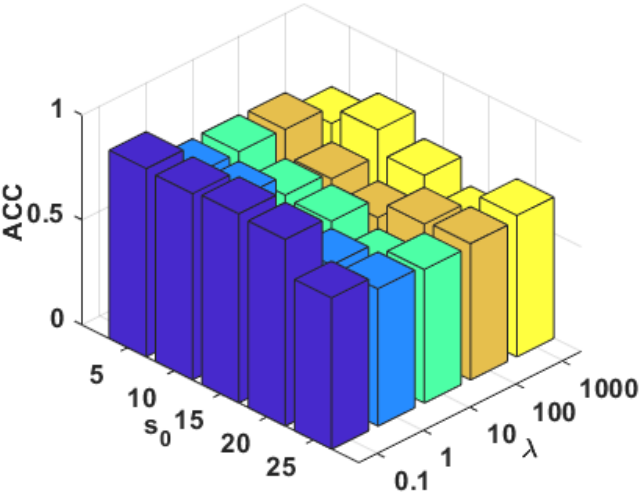}
		\caption{HW2}
	\end{subfigure}
        \centering
	\begin{subfigure}{0.3\linewidth}
		\centering
		\includegraphics[width=0.9\linewidth]{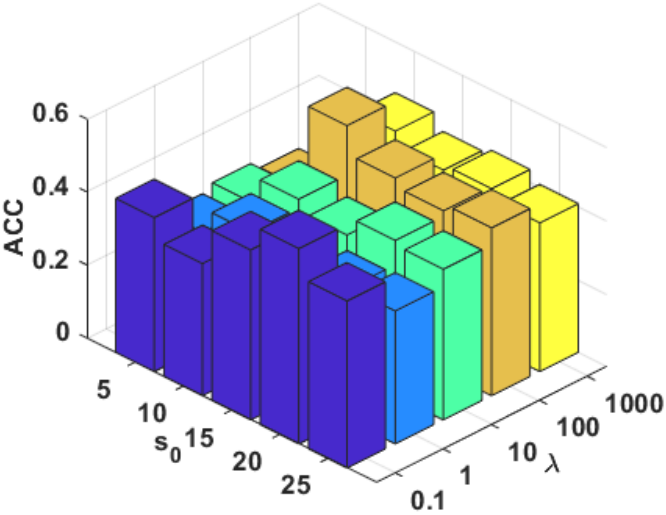}
		\caption{Caltech101-20}
	\end{subfigure}
    	\centering
	\begin{subfigure}{0.3\linewidth}
		\centering
		\includegraphics[width=0.9\linewidth]{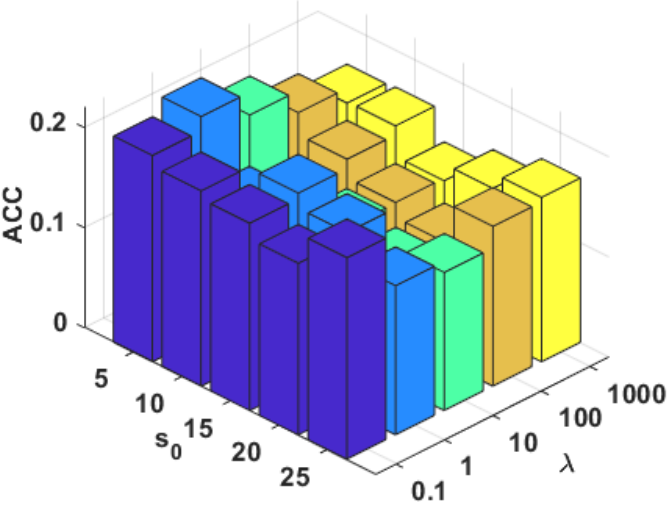}
		\caption{NUS-WIDE-SCENE}
	\end{subfigure}
	\centering
	\begin{subfigure}{0.3\linewidth}
		\centering
		\includegraphics[width=0.9\linewidth]{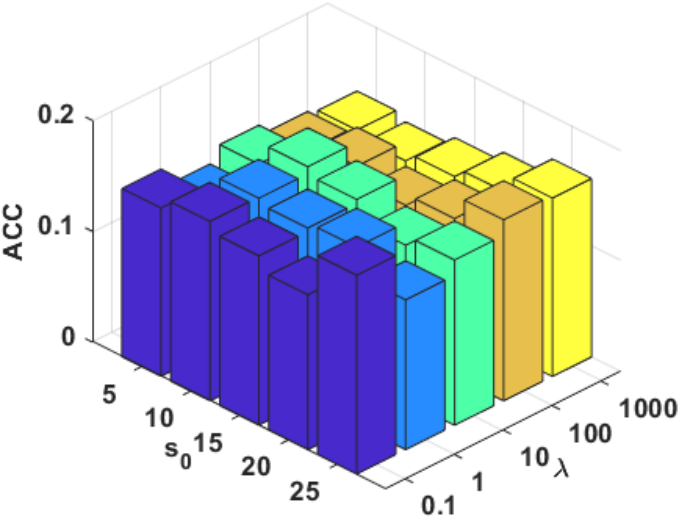}
		\caption{NUS-WIDE-OBJECT}
	\end{subfigure}
        \centering
	\begin{subfigure}{0.3\linewidth}
		\centering
		\includegraphics[width=0.9\linewidth]{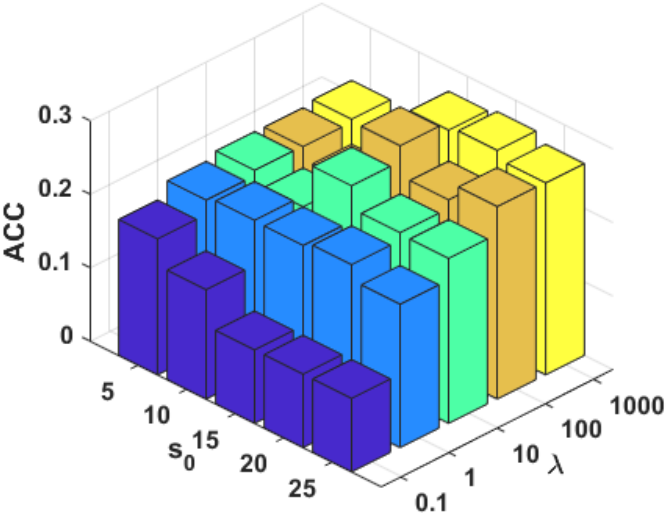}
		\caption{Cifar10}
	\end{subfigure}
	\caption{Sensitivity analysis of $s_0$ and $\lambda$ of IP-RFF-MKPKM on six benchmark datasets.}
        \label{Sensitivity analysis}
\end{figure}

\newpage
\section{C. Missing Pseudocode}

To elucidate the implementation details and operational workflow of our algorithm, we provide the pseudocode of RFF-KPKM as described in Alg. \ref{alg:RFF-KPKM} and IP-RFF-MKPKM as shown in Alg. \ref{alg:IP-RFF-MKPK}.

\begin{algorithm}
    \caption{RFF-KPKM}
    \label{alg:RFF-KPKM}
    \begin{algorithmic}[1]
        \REQUIRE $s_0<0$, $\mathbf{\Theta}_0$, dataset $\mathbf{X}\in\mathbb{R}^{n\times d}$, RFF dimensionality $D$ and constant $\gamma\geq1$.
        \STATE $\tilde{\mathbf{\Phi}}\gets \tilde{\mathbf{\Phi}}=(\tilde{\phi}(\boldsymbol{x}_1),\tilde{\phi}(\boldsymbol{x}_2),\cdots,\tilde{\phi}(\boldsymbol{x}_n))$, $\tilde{\phi}:\mathbb{R}^{d}\to\mathbb{R}^{D}$ is RFF map. 
        \REPEAT
            \STATE $w_{ij}^{(t)}\gets(\sum_{l=1}^{k}\|\tilde{\phi}(\boldsymbol{x}_i)-\boldsymbol{\theta}_{l}^{(t)}\|^{2s_t})^{\frac{1}{s_t}-1}\|\tilde{\phi}(\boldsymbol{x}_i)-\boldsymbol{\theta}_{j}^{(t)}\|^{2(s_t-1)}$
            \STATE $\mathbf{W}_{ij}^{(t)}\gets w_{ij}^{(t)}/\sum_{i_0=1}^{n}w_{i_0j}^{(t)}$
            \STATE $\boldsymbol{\theta}_{j}^{(t+1)}\gets\tilde{\mathbf{\Phi}} {{\mathbf{W}}^{(t)}}^{(j)}$
            \STATE (Optional) $s_{t+1}\gets\gamma \cdot s_t$ 
        \UNTIL convergence
    \end{algorithmic}
\end{algorithm}

\begin{algorithm}
    \caption{IP-RFF-MKPKM}
    \label{alg:IP-RFF-MKPK}
    \begin{algorithmic}[1]
        \REQUIRE $\mathbf{X}\in\mathbb{R}^{n\times d}$, $\mathbf{U}_{0}$, $\mathbf{\Theta}_0$,  $\boldsymbol{\alpha}_{0}$, $D$ , $s_0$, $m\geq 2$, $\eta\geq1$.
        \ENSURE $\mathbf{U}^{*}$, $\mathbf{\Theta}^{*}$, $\boldsymbol{\alpha}^{*}$
        \STATE $\tilde{\mathbf{\Phi}}_{l}\gets \tilde{\mathbf{\Phi}}_{l}=(\tilde{\phi}_{l}(\boldsymbol{x}_1),\tilde{\phi}_{l}(\boldsymbol{x}_2),\cdots,\tilde{\phi}_{l}(\boldsymbol{x}_n))$, $\tilde{\phi}_{l}:\mathbb{R}^{d}\to\mathbb{R}^{D}$ is RFF map. 
        \STATE $\tilde{d}_{ij,l}^{(0)} \gets (u_{ij}^{(0)})^{m}\|\tilde\phi_l(\boldsymbol{x}_i)-\boldsymbol{\boldsymbol{\theta}}_{j,l}^{(0)}\|^2
     +(1-u_{ij}^{(0)})^{m}\eta_{j,l}.$
        \REPEAT
            \STATE \textbf{Step1:} Compute the partial derivative by $$w_{ij}^{(t)}\gets \frac{\frac{1}{k}(\sum_{l=1}^{L}\alpha_l^{(t)}\tilde{d}_{ij,l}^{(t)})^{(s-1)}}
{(\frac{1}{k}\sum_{c=1}^k (\sum_{l=1}^{L}\alpha_l^{(t)}\tilde{d}_{ic,l}^{(t)})^{s})^{(1-1/s)}}.$$
            \STATE \textbf{Step2:} Update $\mathbf{U}$ by
            $$\mathbf{U}^{(t)}\gets \left(1 + \left(\frac{\sum_{l=1}^{L}\alpha_l^{(t)}\|\tilde\phi_l(\boldsymbol{x}_i)-{\boldsymbol{\theta}}_{j,l}^{(t)}\|^2}{\sum_{l=1}^{L}\alpha_l^{(t)}\eta_{j,l}}\right)^{\frac{1}{m - 1}}\right)^{-1}.$$
            \STATE \textbf{Step3:} Update $\mathbf{\Theta}$ by
            $$\boldsymbol{\theta}_{j,l}^{(t+1)} = \frac{\sum_{i=1}^{n} w_{ij}^{(t)}\left(u_{ij}^{(t+1)}\right)^m \tilde\phi_l(\boldsymbol{x}_i)}{\sum_{i=1}^{n} w_{ij,l}^{(t)}\left(u_{ij}^{(t+1)}\right)^m}.$$
            \STATE \textbf{Step4:} Compute $\tilde{d}_{ij,l}$ by
            $$\tilde{d}_{ij,l}^{(t)} \gets (u_{ij}^{(t)})^{m}\|\tilde\phi_l(\boldsymbol{x}_i)-\boldsymbol{\boldsymbol{\theta}}_{j,l}^{(t)}\|^2
     +(1-u_{ij}^{(t)})^{m}\eta_{j,l}.$$
            \STATE \textbf{Step5:} Update $\boldsymbol{\alpha}$ by
            $$    \alpha_l\gets\frac{\exp\left(-\frac{1}{\lambda} \sum_{i=1}^{n} \sum_{j=1}^{k} w_{ij}^{(t)} \tilde{d}_{ij,l}^{(t+1)}\right)}{\sum_{l=1}^{L} \exp\left(-\frac{1}{\lambda} \sum_{i=1}^{n} \sum_{j=1}^{k} w_{ij}^{(m)} \tilde{d}_{ij,l}^{(t+1)}\right)}.$$
            \STATE \textbf{Step6(Optional):} $s_{t+1}\gets\eta \cdot s_t.$
        \UNTIL convergence
    \end{algorithmic}
\end{algorithm}

\end{document}